\newtheorem{theorem}{Theorem}[section]
\newtheorem{lemma}[theorem]{Lemma}
\newtheorem{corollary}[theorem]{Corollary}
\theoremstyle{definition}
\newtheorem{definition}[theorem]{Definition}
\theoremstyle{remark}
\newtheorem{remark}[theorem]{Remark}
\DeclareMathOperator*{\argmax}{arg\,max}
\newcommand{\V}[0]{\mathcal{V}}
\newcommand{\ocal}[0]{\mathcal{O}}
\newcommand{\E}[0]{\mathbb{E}}      
\newcommand{\ev}[0]{\mathcal{E}}    
\newcommand{\pr}[0]{\mathbb{P}}
\newcommand{\G}[0]{\mathcal{G}}
\newcommand{\mui}[0]{\mu(A_{N(i)})}
\newcommand{\npj}[0]{n_{tP_j}(A)}
\newcommand{\npjT}[0]{n_{TP_j}(A)}
\newcommand{\npjs}[0]{n_{tP_j}(A^*)}
\newcommand{\npjt}[0]{n_{tP_j}(A_t)}
\newcommand{\rtgpi}[1]{R_T(\pi, #1)}
\title{Graph-Dependent Regret Bounds in Multi-Armed Bandits with Interference}
\author[1]{\href{mailto:fateme.jamshidi@epfl.ch}{Fateme~Jamshidi \thanks{Equal contribution.}}}
\author[1]{\href{mailto:mohammad.shahverdikondori@epfl.ch}{Mohammad~Shahverdikondori \protect\footnotemark[1]}}
\author[1]{\href{mailto:negar.kiyavash@epfl.ch}{Negar~Kiyavash}}
\affil[1]{%
    College of Management of Technology\\
    EPFL\\
    Lausanne, Switzerland
}
\begin{document}
\maketitle

\begin{abstract}
We study multi-armed bandits under network interference, where each unit’s reward depends on its own treatment and those of its neighbors in a given graph. This induces an exponentially large action space, making standard approaches computationally impractical. 
We propose a novel algorithm that uses the local graph structure to minimize regret.
We derive a graph-dependent upper bound on cumulative regret that improves over prior work.
Additionally, we provide the first lower bounds for bandits with arbitrary network interference, where each bound involves a distinct structural property of the graph.
These bounds show that for both dense and sparse graphs, our algorithm is nearly optimal, with matching upper and lower bounds up to logarithmic factors.
When the interference graph is unknown, a variant of our algorithm is Pareto optimal: no algorithm can uniformly outperform it across all instances.
We complement our theoretical results with numerical experiments, showing that our approach outperforms the baseline methods.
\end{abstract}

\section{Introduction}
Multi-armed bandits (MABs) have become a fundamental tool for online decision-making across a variety of applications \citep{mab1-bouneffouf2020survey, mab2-tewari2017ads, lattimore2020bandit, pure-bubeck2009pure}.
Although classic MABs perform well in certain settings, they are prone to systematic errors when \emph{interference} is present.
Interference arises when a unit’s outcome depends not only on its own treatment but also on the treatment of others.
For instance, the vaccination status of an individual can affect whether others fall ill, or a personal advertisement may influence friends of the targeted user.
Such interference poses challenges to traditional experimental design and sequential learning frameworks, which often assume independent unit responses; formally, the \textit{Stable Unit Treatment Value Assumption} (SUTVA) \citep{rubin1980randomization}.
Violations of SUTVA with cross-unit dependencies complicates the applicability of traditional methods, leading to suboptimal decisions. 

In recent years, certain methods have been developed that account for interference in settings where only the quality of the final output is of interest \citep[e.g.,][]{ugander2013graph, athey2018exact}.
The cumulative performance of the experimentation, which is particularly relevant in most online settings, has received relatively less attention, in part because it is more difficult to analyze. 
We aim to advance this line of research by studying the MAB problem in the presence of arbitrary network interference.
Specifically, we consider $N$ units representing entities such as users on an online platform or a medical trial. 
There are $k$ treatment arms (or arms) and a time horizon of $T$ rounds.
At each round, the learner assigns one arm to each unit and observes the resulting reward, with the goal of maximizing the total accumulated reward over all rounds. 
Unlike the traditional MAB framework, where rewards are assumed to be independent, interference introduces a dependency structure, with the reward of each unit determined not only by its own arm but also by the arms assigned to others.
This dependency can significantly increase the complexity of exploration, as the action space may include up to $k^N$ different actions, growing exponentially in the number of units.
  
Classical MAB algorithms, such as the Upper Confidence Bound (UCB) method \citep{auer2002finite}, yield a regret scaling of $\tilde{\mathcal{O}}(\sqrt{\frac{T}{N}k^N})$\footnote{Assuming each unit’s reward is 1-sub-Gaussian, the average reward over $N$ units follows a $(1/\sqrt{N})$-sub-Gaussian distribution.}, which becomes prohibitive as $N$ grows.
Furthermore, without imposing any assumptions on the interference structure, the regret is guaranteed to scale as $\Omega(\sqrt{\frac{T}{N}k^N})$, as shown by lower bounds in the MAB literature \citep{lattimore2020bandit}.

To address this challenge, we consider a setting where the reward of any given unit is influenced by its own arm and the arms assigned to its neighboring units, as defined by a graph that represents the interferences among the units.
This graph is referred to as the \textit{interference graph}.

\noindent\textbf{Contributions}
Our main contributions are as follows.
\begin{itemize}
    \item We introduce the Partitioned Upper Confidence Bound with Interference (PUCB-I) algorithm for the MAB problem in the presence of network interference.
    Using the local structure of the interference graph, we establish a graph-dependent upper bound on cumulative regret. 
    The algorithm outperforms state-of-the-art methods in all of the settings.
    \item We derive the first lower bounds on the regret for bandits with arbitrary network interference.
    We provide two distinct lower bounds that quantify the worst-case regret of any algorithm based on the topological properties of the underlying interference graph.
    Specifically, these properties pertain to i) the symmetries in neighborhoods of the nodes (representing the units) and ii) the structure of doubly-independent sets of the interference graph.
    We demonstrate that the upper bound of our algorithm matches these lower bounds up to logarithmic factors for both dense and sparse graphs, thereby establishing its near-optimality.
    Furthermore, in general, the gap between the upper and lower bounds is at most $\sqrt{N}$. 
    \item We extend our analysis to the setting where the interference graph is unknown. 
    We introduce a variant of our algorithm, PUCB-UI, which assumes the true interference graph is complete and runs PUCB-I accordingly.
    We show that PUCB-UI is in the Pareto optimal set of algorithms, meaning that no algorithm can uniformly outperform it across all instances.
    This result establishes a fundamental limit on learning under an unknown interference structure.
    \item  Through simulations, we show that our algorithm outperforms baseline methods.

\end{itemize}

\noindent\textbf{Paper Outline}
We begin with related work, followed by background and key definitions in Section~\ref{sec: background}.
Section~\ref{sec: upper} presents our algorithm and its regret upper bound, and Section~\ref{sec: lower} provides matching lower bounds. 
Section ~\ref{sec: unkown} addresses the setting with an unknown interference graph.
Simulations appear in Section ~\ref{sec: exp} and all the proofs are included in Appendix~\ref{apx:proofs}.

\subsection{Related Work}
Cross-unit interference has been studied in statistics \citep{hudgens2008toward, eckles2017design, basse2018analyzing, li2022random, leung2023network}, computer science \citep{ugander2013graph, yuan2021causal, ugander2023randomized} and medical research \citep{tchetgen2012causal}.
To address the challenge of interfering units, researchers have proposed tailor-made methodologies in structured interference models of their specific settings, such as intragroup interference \citep{rosenbaum2007interference, hudgens2008toward}, network neighborhoods \citep{ugander2013graph, bhattacharya2020causal, yu2022estimating, gao2023causal}, bipartite graphs \citep{pouget2019variance, bajari2021multiple, bajari2023experimental} and general graph models through exposure mappings
\citep{aronow2012general, aronow2017estimating, elena1-adhikari2025learning}.

Despite this extensive literature, all the aforementioned works have focused merely on strategies that maximize the final reward.
The harder problem of understanding cumulative performance over time remains relatively unexplored. 
The MAB framework is naturally well-suited to address this gap by balancing exploration and exploitation. 
Studies in structured bandits have shown that incorporating problem structure, such as dependencies among actions or feedback,
can lead to more efficient learning \citep{chen2014combinatorial, lattimore2016causal, alon2017nonstochastic, jamshidi2024confounded, shahverdikondori2025optimal}.
In our case, the structure arises from interference between units.
A recent work by \citet{zhang2024online} studied online experimental design under network interference, focusing on a trade-off between reward and estimation.
In contrast, our focus is on minimization of regret in sequential learning under arbitrary interference.
\cite{elena2-faruk2025leveraging} studied a contextual MAB setting with interference in a social network. In their framework, at each round, a single unit is assigned a treatment, which may activate its neighboring units randomly, with the objective of maximizing the total number of activated nodes over time. The linear contextual bandit setting with network interference is also studied recently by \cite{contextual-xu2024linear}.
Our setting can be formulated as a specific instance of the combinatorial bandit framework \citep{cesa2012combinatorial, cucb-chen2013combinatorial}.
However, we show that no algorithm achieving optimal regret in the general combinatorial setting can be optimal in ours. We show this by deriving an upper bound that improves over the matching upper and lower bounds known for combinatorial bandits (see Remark \ref{remark: comb}). 
These results underscore the theoretical significance of our setting as a structured subclass that merits separate analysis.
Another line of work is multiple-play bandits \citep{anantharam1987asymptotically, chen2013combinatorial, lagree2016multiple, jia2023short} that considers settings where the learner selects multiple arms simultaneously and observes feedback for each arm. 
However, they assume that the rewards are independent across arms, which is not the case in the presence of interference.

Recently, \citet{jia2024multi} studied a batched adversarial bandit framework where $N$ units lie on the $\sqrt{N} \times \sqrt{N}$ unit grid.
By limiting the action space to those with an identical arm for all units, they achieved a regret bound that does not depend exponentially on $N$. 
However, such an approach is quite limiting in practice, as the optimal action may be one that assigns heterogeneous arms to units. 

A common approach to address the curse of dimensionality is to impose sparsity constraints, where only a fraction of actions yield non-zero rewards.
\citet{agarwal2024multi} studied stochastic MABs under network interference, modeling rewards over the hypercube $[k]^N$ with each unit’s reward influenced by its own and its neighbors' arms.
Their regression-based algorithm achieves regret $\tilde{\mathcal{O}} \left(T^{2/3} \right)$, which is worse than the classical MAB settings, highlighting the impact of interference.
They further proposed a sequential action elimination algorithm with regret scaling as $\sqrt{T}$, albeit growing with $N$.
In this paper, we use the neighborhood structure to establish an upper bound on regret that improves over previous work by a factor between $\sqrt{N}$ and $N$, depending on the interference graph.
In particular, we avoid regrets growing with $N$, in contrast to previous work.
Furthermore, we derive matching lower bounds and prove that our algorithm is near-optimal for both sparse and dense graphs.

\section{Problem Setup}\label{sec: background}

We consider a stochastic multi-armed bandit setting with $N$ units and \( k \) treatment arms (or simply, arms).
At each round \( t \in [T] \), the learner selects an assignment \( A_t \in [k]^N \), where $A_{ti}$ denotes the treatment assigned to unit $i \in [N]$.
The reward for unit $i$ is given by \( Y_{ti}: [k]^N \to \mathbb R \), which depends on both the unit's own arm and those of its neighbors due to interference.

Interference is modeled by a graph \( \mathcal{G} \coloneqq ( [N], \mathcal{E}) \), where nodes represent units, and an edge \( (i,j) \in \mathcal{E} \) indicates that treatments assigned to $i$ and $j$ affect each other's rewards.
The neighborhood of unit \( i \) is denoted by \( N(i) \), which includes $i$ and $i$ and its neighbors, with size $|N(i)|= d_i+1$. 

At each round $t$, the learner selects a treatment assignment $A_t$ based on past observations and observes the rewards $Y_{ti}(A_t)$ \footnote{We assume that for each treatment $A$ and unit $i$, the reward distribution $Y_i(A)$ is $1$-sub-Gaussian, a standard assumption in the bandit literature \citep{lattimore2020bandit, bubeck2012regret}. } for all $i \in [N]$. 

Formally, a policy $\pi \coloneqq (\pi_1, \dots, \pi_T)$ is a sequence of mappings:
\[
\pi_t : \big([k]^N \times \mathbb R^N\big)^{t-1} \to \mathcal{P}([k]^N),
\]
where \( \mathcal{P}([k]^N) \) denotes the space of probability distributions over treatment assignments and the domain represents the history of past assignments and rewards.
The treatment at round $t$ is drawn as:
\[
A_t \sim \pi_t\left(A_1, Y_1, \ldots, A_{t-1}, Y_{t-1}\right).
\]
To evaluate a policy over $T$ rounds, we define cumulative regret as the gap between the optimal reward in hindsight and the expected reward achieved by the policy. 
The expected reward for unit $i$ under treatment $A$ is given by 
\begin{equation*}
    \begin{aligned}
        \mu_i(A) \coloneqq& \mathbbm E[Y_i| A_{N(i)}] ,\\ \quad \text{with} \quad\mu_i(A) &\in [0,1] \quad \text{for all} \quad i \in [N] , A \in [k]^N.
    \end{aligned}
\end{equation*}

\begin{definition}[Regret]\label{def: reg}
The regret of policy $\pi$ on instance $\V$ is defined as:
\begin{align*}
   & {Reg}_T(\pi, \V)  \coloneqq \\ &\frac{1}{N} \mathbb{E}_{A_t\sim\pi} \Big[\max_{A \in [k]^N} T \sum_{i \in [N]} \mu_i(A) - \sum_{\substack{i \in [N] \\ t \in [T]}}\mu_i(A_t)\Big],
\end{align*}
where the first term is the cumulative reward under an optimal treatment assignment, and the second corresponds to the cumulative reward achieved by the policy $\pi$.
The instance $\V$ includes the interference graph and the reward distributions.
For simplicity, we may omit $\pi$ or $\V$ when clear from context. 
\end{definition}

\begin{remark}
If the interference graph $\G$ is not connected, each connected component can be treated as an independent instance.
The problem can then be solved separately for each component, and the total regret is the weighted average of regrets over them.
The same applies to proving lower bounds.
Hence, we assume $\G$ is connected throughout the paper.
\end{remark}

\section{Upper bound}\label{sec: upper}

In this section, we propose the Partitioned UCB with Interference (PUCB-I) algorithm to address the MAB problem under network interference.
The algorithm relies on a partitioning of units, based on a structural property of the interference graph, to choose its next treatment assignment. 

Specifically, we define a binary relation over pairs of units $i, i' \in [N]$, where $i \sim i'$ if they are connected and have identical neighborhoods in the interference graph $\G$; that is,
\begin{align} \label{eq: relation}
    i \sim i' \Longleftrightarrow  N(i) =  N(i').
\end{align}

This relation is an equivalence, partitioning $[N]$ into $M$ classes $\{P_1, P_2, \dots, P_M\}$.
Let $m_j = |P_j|$ denote the size of partition $P_j$ and $D_j$ the common degree of its nodes, since they share the same neighborhood.
Figure \ref{fig: partition} depicts such a partitioning with $P_1 = \{A, B\}$, $P_2 = \{C\}$,  $P_3 = \{D, E\}$, and $P_4 = \{F, G, H\}$.
For a graph $\G$, we denote by $M(\G)$ the number of partitions induced by the relation in Equation \eqref{eq: relation} and use $M$ when $\G$ is clear from context.

To explain the key components of the algorithm, we first introduce two essential definitions on graphs.

\begin{definition}[Doubly-Independent Set] \label{def: DI}
    Let $\G$ be a connected graph with $N$ nodes.
    A set of nodes $S$ is called a \textit{doubly-independent set} if no two nodes in $S$ are adjacent, nor do they share a common neighbor outside $S$. 
    The family of all doubly-independent sets of $\G$ is denoted by $DI(\G)$.
\end{definition}

\begin{definition}[Square Chromatic Number] \label{def: square}
    The \textit{square chromatic number} of a graph $\G$ is the minimum number of colors required to color its nodes such that all nodes of the same color form a doubly-independent set.
    The square chromatic number of $\G$ is denoted by $\chi(\G^2)$.
\end{definition}

\begin{figure}[t]
    \centering
    \begin{tikzpicture}[
        scale=0.8, 
        node/.style={
            circle, 
            draw, 
            thick, 
            minimum size=0.3cm, 
            inner sep=2pt,
            font=\small\sffamily\bfseries
        },
        P1/.style={fill=red!35!white},
        P2/.style={fill=blue!35!white},
        P3/.style={fill=green!40},
        P4/.style={fill=yellow!40},
        edge/.style={-},
    ]
    \node[node, P1] (A) at (-3, 1) {A};
    \node[node, P1] (B) at (-3, -1) {B};
    
    \node[node, P4] (C) at (-1, 0) {C};
    
    \node[node, P2] (D) at (1, 2) {D};
    \node[node, P2] (E) at (1, -2) {E};
    
    \node[node, P3] (F) at (2, 0) {F};
    \node[node, P3] (G) at (4, 1) {G};
    \node[node, P3] (H) at (4, -1) {H};

    \draw[black] (A) -- (B);
    \draw[black] (A) -- (C);
    \draw[black] (B) -- (C);
    
    \draw[black] (C) -- (D);
    \draw[black] (C) -- (E);

    \draw[black] (D) -- (E);
    \draw[black] (D) -- (F);
    \draw[black] (D) -- (G);
    \draw[black] (D) -- (H);
    \draw[black] (E) -- (F);
    \draw[black] (E) -- (G);
    \draw[black] (E) -- (H);

    \draw[black] (F) -- (G);
    \draw[black] (F) -- (H);
    \draw[black] (G) -- (H);

    \end{tikzpicture}
    \caption{
    Graph partitions from Eq.~\eqref{eq: relation}; same-color nodes share a partition.}
    \label{fig: partition}
    \vspace{-0.3cm}
\end{figure}

\paragraph{Initialization.}
In this phase, the algorithm aims to collect at least one sample from each possible treatment $A_{N(i)}$ for every unit $i$ and its neighbors. Exhaustively exploring all $k^N$ joint treatments is highly inefficient, as it results in collecting excessive samples when the degrees are small, leading to high regret during initialization.

To address this, we propose a graph-based initialization strategy that efficiently explores the relevant treatment space in a small number of rounds. First, we partition the graph nodes into sets $S_1, S_2, \ldots, S_{\chi(\G^2)}$ according to Definition \ref{def: square}, where each $S_l$ is a doubly-independent set (note that these partitions differ from those defined by Relation \ref{eq: relation}). Let $n_l$ denote the maximum degree of nodes in $S_l$.

For each $S_l$, we allocate $k^{n_l+1}$ rounds to ensure that each unit $i \in S_l$ receives at least one sample for every configuration of $A_{N(i)}$. Due to the doubly-independent property, the treatment assignments for one unit and its neighbors do not interfere with the assignments of other units in the same set or their neighbors. This enables us to simultaneously explore all treatment combinations for all units in $S_l$ and their neighborhoods.
By devoting $k^{n_l+1}$ rounds to each $S_l$, we assign treatments in a symmetric manner to ensure that every unit $i \in S_l$ receives exactly $k^{n_l - d_i}$ samples from each possible configuration of $A_{N(i)}$. Since $n_l \leq \Delta$ for all $l$, the total number of rounds required for initialization is at most:  
$$
\sum_{l \in [\chi(\G^2)]} k^{n_l+1} \leq \chi(\G^2) \cdot k^{\Delta+1}.
$$
This method is significantly more efficient than exhaustive enumeration over all $k^N$ treatments or even taking one round for each treatment configuration per unit and its neighbors, which would require $\sum_{i \in [N]} k^{d_i+1}$ rounds.
We will later show that $\chi(\G^2)$ can be upper bounded in terms of $\Delta$; see Lemma~\ref{lem: constant-chromatic}.

\begin{algorithm}[tb]
   \caption{Partitioned UCB with Interference (PUCB-I)}
   \label{alg:ucb_bandit}
\begin{algorithmic}[1]
   \STATE {\bfseries Input:} Number of rounds \( T \), interference graph \( \mathcal{G} \), number of arms \( k \), confidence parameter \( \delta \).
   \STATE {\bfseries Initialization:} Perform the initialization phase described above and set $T_0 = \sum_{l \in [\chi(\G^2)]} k^{n_l+1}$.
   \FOR{each round \( t =  T_0, \dots, T-1 \)}
       \FOR{each partition \( P_j \in \{P_1, \dots, P_M\} \)}
           \STATE Compute \( \text{UCB}_{tP_j}(A) \) for all \( A \in [k]^N \) using Eq.~\eqref{eq: ucb}.
       \ENDFOR
       \STATE Select $A_{t+1}$ using Eq.~\eqref{eq: At}.
       \STATE Observe rewards \( Y_{t+1i}(A_{t+1}) \) for all \( i \in [N] \).
       \STATE Update \( \hat{\mu}_{t+1i}(A) \) and \( n_{t+1P_j}(A) \) for the explored treatments.
   \ENDFOR
\end{algorithmic}
\end{algorithm}
\paragraph{Empirical Mean Reward.} For each unit \( i \in [N] \) and treatment $ A_{N(i)} \in [k]^{d_i+1} $, the empirical mean reward is estimated as:
  \begin{equation*}
  \begin{aligned}
            \hat{\mu}_{ti}(A) &= \frac{\sum_{t'=1}^t Y_{t'i}(A_{N(i)})}{n_{ti}(A)} \quad\\ \text{with}  \quad  n_{ti}(A) = &\sum_{t'=1}^t \mathbbm{1} \{ A_{t'N(i)}= A_{N(i)}\},
  \end{aligned}
  \end{equation*} 
where \( n_{ti}(A) \) is the number of times $A_{N(i)}$ has been assigned to unit $i$ up to time \( t \).
Since $n_{ti}(A) = n_{ti'}(A)$ for all $i, i' \in P_j$, we define $\npj \coloneqq n_{ti}(A)$ for any unit $i \in P_j$.

\textbf{UCB.} For each partition \( P_j\) at round $t$, compute the UCB for every treatment \( A \) as:
    \begin{equation}\label{eq: ucb}
             UCB_{tP_j} (A) \coloneqq \sum_{i \in P_j}\hat{\mu}_{ti}(A) + \sqrt{2 \log( 2/\delta) \frac{m_j}{\npj}}.
    \end{equation}
\textbf{Treatment Assignment.} 
At each round, the learner selects the treatment that maximizes the sum of UCBs across all partitions:
\begin{equation}\label{eq: At}
            A_{t+1} =\argmax_{A \in [k]^N} \sum_{j \in [M]} UCB_{tj}(A) .
\end{equation}

Following standard practice in the combinatorial bandit literature \citep{cucb-chen2013combinatorial, cesa2012combinatorial, chen2016combinatorial, wang2017improving, perrault2020statistical, tzeng2023closing}, we assume access to a computationally efficient offline oracle that, given the current estimates (e.g., mean rewards or UCBs), returns the treatment assignment maximizing the objective in Equation~\eqref{eq: At}.
This abstraction, commonly referred to as a linear maximization (LM) oracle, is standard in the combinatorial bandits. Without such an oracle, even solving the offline optimization problem with known rewards is generally intractable, rendering regret minimization in the online setting computationally infeasible.

Regarding the existence of such an oracle, the recent work \citet{shahverdikondori2026neighborhood} studies the computational complexity of this optimization problem. The authors show that it is NP-hard for general graphs and propose a dynamic programming algorithm that computes the exact optimizer in time $O\!\left(k^{tw(\G^2)} N^2\right)$, where $tw(\G^2)$ denotes the \emph{treewidth} of the squared graph. They further prove that this running time is essentially optimal under standard complexity assumptions. Treewidth is a graph parameter that measures how “tree-like” a graph is; for example, every tree has treewidth $1$ (see \citet{cygan2015parameterized}). 
This result highlights that the optimization is computationally tractable when $tw(\G^2)$ is small.
In addition, they show that when rewards are positive, efficient constant-factor approximation algorithms exist for classes of practical graphs, such as those with constant maximum degree $\Delta$.

 \textbf{Reward Observation and Update.} After selecting $A_{t+1}$, the algorithm observes \( Y_{t+1i}(A_{t+1}) \) for all \( i \in [N] \), and updates \( \hat{\mu}_{t+1i}(A) \) and \( n_{t+1P_j}(A) \) accordingly.

The following theorem establishes a graph-dependent upper bound on the expected cumulative regret of Algorithm \ref{alg:ucb_bandit}:

\begin{restatable}{theorem}{thupper}[Graph-Partitioned Regret Upper Bound]\label{thm: upper-bound}
The expected cumulative regret of Algorithm \ref{alg:ucb_bandit} with $\delta = (T^2N \sum_{j \in [M]} k^{D_j+1})^{-1}$ interacting with any instance with $1$-sub-Gaussian rewards and interference graph $\G$, partitioned into $P_1, P_2, \ldots, P_M$, satisfies:
\[
{Reg}_T  \in  \mathcal{O}\left( \sqrt{ \frac{T}{N^2} \log (TN)} \sum_{j \in [M]}  \sqrt{m_j k^{D_{j}+1}}\right).
\]
\end{restatable}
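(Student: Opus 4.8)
The plan is to run an optimism-based UCB analysis, but carefully exploiting the partition structure so that each partition's confidence width scales with $\sqrt{m_j}$ rather than $m_j$. Write $f_j(A) \coloneqq \sum_{i \in P_j} \mu_i(A)$; since every unit in $P_j$ shares the same neighborhood, $f_j$ depends on $A$ only through the configuration $A_{N(i)} \in [k]^{D_j+1}$, and the optimal value is $\mathrm{OPT} \coloneqq \max_A \sum_{j} f_j(A)$, attained at some $A^\star$. The normalized regret is $\frac{1}{N}\mathbb{E}\big[\sum_t (\mathrm{OPT} - \sum_j f_j(A_t))\big]$.

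First I would establish the good event. For a fixed partition $j$ and configuration $a \in [k]^{D_j+1}$, the quantities $\sum_{i\in P_j} Y_{ti}$ observed on the rounds where $a$ is played form an i.i.d.\ sequence of $\sqrt{m_j}$-sub-Gaussian variables (a sum of $m_j$ independent $1$-sub-Gaussian unit rewards). Applying the sub-Gaussian tail bound at each sample index $s \in [T]$ and taking a union bound over $s$, over the $k^{D_j+1}$ configurations, and over the $M$ partitions, the event
$$\mathcal{E} = \Big\{ \forall t,j,A : \ \big|\hat{\mu}_{tP_j}(A) - f_j(A)\big| \le \sqrt{2 m_j \log(2/\delta)\, / \, n_{tP_j}(A)}\Big\}$$
holds with probability at least $1 - \delta\, T \sum_{j} k^{D_j+1}$, where $\hat{\mu}_{tP_j} \coloneqq \sum_{i\in P_j}\hat\mu_{ti}$. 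Taking the union over the random sample index (rather than over rounds) is what handles the adaptivity of the counts $n_{tP_j}(A)$. With $\delta = (T^2 N \sum_j k^{D_j+1})^{-1}$ this failure probability is at most $1/(TN)$, so, since the normalized regret is always at most $T$, the bad event contributes only $O(1/N)$.

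On $\mathcal{E}$, optimism gives $\sum_j \mathrm{UCB}_{tP_j}(A^\star) \ge \mathrm{OPT}$, and since $A_{t+1}$ maximizes $\sum_j \mathrm{UCB}_{tj}$ by Eq.~\eqref{eq: At}, also $\sum_j \mathrm{UCB}_{tP_j}(A_{t+1}) \ge \mathrm{OPT}$. Combining this with the upper confidence bound on $\hat\mu_{tP_j}(A_{t+1})$ yields the per-round bound
$$\mathrm{OPT} - \sum_j f_j(A_{t+1}) \le 2\sum_j \sqrt{2 m_j \log(2/\delta)\,/\,n_{tP_j}(A_{t+1})}.$$
Summing over rounds, for each partition $j$ I would group rounds by the played configuration $a$: if $a$ is played for the $s$-th time the corresponding term is $1/\sqrt{s}$ (the initialization phase guarantees counts are at least $1$), so $\sum_t 1/\sqrt{n_{tP_j}(A_t)} = \sum_a \sum_{s=1}^{n_{TP_j}(a)} 1/\sqrt{s} \le 2\sum_a \sqrt{n_{TP_j}(a)}$. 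Since exactly one configuration's count increments per round, $\sum_a n_{TP_j}(a) \le T$, and Cauchy--Schwarz gives $\sum_a \sqrt{n_{TP_j}(a)} \le \sqrt{k^{D_j+1}}\,\sqrt{T}$. This Cauchy--Schwarz step converts the counts into the $\sqrt{k^{D_j+1}}$ factor which, together with the $\sqrt{m_j}$ from the width, produces the $\sqrt{m_j k^{D_j+1}}$ term.

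Collecting the pieces, the unnormalized cumulative regret is $O\big(\sqrt{T \log(2/\delta)}\,\sum_j \sqrt{m_j k^{D_j+1}}\big)$; dividing by $N$ and noting that under the standing assumption that $T$ dominates the initialization cost $\chi(\mathcal{G}^2) k^{\Delta+1}$ we have $\sum_j k^{D_j+1} \le N k^{\Delta+1} = O(TN)$, so $\log(2/\delta) = O(\log(TN))$, gives the claimed bound. I expect the main obstacle to be the concentration step: establishing the $\sqrt{m_j}$ (rather than $m_j$) scaling of the confidence width, which is the source of the improvement over prior work, while correctly handling the adaptively chosen random counts $n_{tP_j}(A)$ via the union over sample indices.
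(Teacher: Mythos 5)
Your proposal is correct and follows essentially the same route as the paper's proof: the same good event with width $\sqrt{2\log(2/\delta)\,m_j/n_{tP_j}(A)}$ from summing $m_j$ independent $1$-sub-Gaussian rewards, the same union bound yielding failure probability $T\sum_j k^{D_j+1}\delta$, the same optimism step, and the same grouping-by-configuration plus $\sum_{s\le n}1/\sqrt{s}\le 2\sqrt{n}$ plus Cauchy--Schwarz to obtain $\sqrt{Tk^{D_j+1}}$ per partition. Your union over sample indices is a slightly more careful handling of the adaptive counts than the paper's union over rounds, but both give the same bound and the argument is otherwise identical.
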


The dependency on \( k^{D_j+1} \) reflects the complexity introduced by interference through each partition’s neighborhood size.
A perhaps more interpretable form of the bound can be obtained in terms of the maximum degree $\Delta:= \max_{i\in [N]} d_i$:
\begin{corollary}\label{cor:delta}
    Let $\Delta$ denote the maximum degree of $\mathcal{G}$. 
    Then,
 \begin{equation*}
     {Reg}_T \in \mathcal{O} \left( \sqrt{\frac{TM}{N} k^{\Delta+1} \log (TN)}\right ).
 \end{equation*}
\end{corollary}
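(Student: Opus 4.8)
The plan is to derive Corollary~\ref{cor:delta} directly from Theorem~\ref{thm: upper-bound} by bounding the sum $\sum_{j \in [M]} \sqrt{m_j k^{D_j+1}}$ that appears in the theorem. The two key structural facts I would exploit are the partition constraints $\sum_{j \in [M]} m_j = N$ and $D_j \le \Delta$ for every $j$, which follow from the definitions: the $P_j$ partition $[N]$, and each $D_j$ is a common degree of nodes in $\G$, hence at most the maximum degree $\Delta$.

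First I would replace each $k^{D_j+1}$ by the uniform upper bound $k^{\Delta+1}$, so that
\[
\sum_{j \in [M]} \sqrt{m_j k^{D_j+1}} \le \sqrt{k^{\Delta+1}} \sum_{j \in [M]} \sqrt{m_j}.
\]
The remaining task is to bound $\sum_{j \in [M]} \sqrt{m_j}$ in terms of $M$ and $N$. Here I would apply the Cauchy--Schwarz inequality: treating the sum as an inner product of the all-ones vector with $(\sqrt{m_1}, \dots, \sqrt{m_M})$,
\[
\sum_{j \in [M]} \sqrt{m_j} \le \sqrt{M} \cdot \sqrt{\sum_{j \in [M]} m_j} = \sqrt{M N},
\]
where the final equality uses $\sum_j m_j = N$. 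This is the one genuinely non-trivial step, and it is exactly where the $\sqrt{M}$ factor enters the corollary; everything else is a uniform substitution.

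Combining these gives $\sum_{j \in [M]} \sqrt{m_j k^{D_j+1}} \le \sqrt{M N \, k^{\Delta+1}}$. Substituting into the theorem's bound yields
\[
Reg_T \in \mathcal{O}\!\left( \sqrt{\frac{T}{N^2} \log(TN)} \cdot \sqrt{M N\, k^{\Delta+1}} \right) = \mathcal{O}\!\left( \sqrt{\frac{TM}{N} k^{\Delta+1} \log(TN)} \right),
\]
which is precisely the claimed expression after collecting the factors of $N$ under the square root. I expect no real obstacle here: the main content is identifying Cauchy--Schwarz as the right tool to convert the $\ell_1$-type sum of $\sqrt{m_j}$ into the $\ell_2$-controlled $\sqrt{MN}$, and being careful that the looseness introduced (bounding $D_j$ by $\Delta$ and invoking Cauchy--Schwarz, which is tight only when all $m_j$ are equal) is acceptable since the corollary is explicitly offered as a more interpretable but weaker form of the theorem.
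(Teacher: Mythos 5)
Your proposal is correct and matches the paper's own argument: the paper likewise bounds $k^{D_j+1}$ by $k^{\Delta+1}$ and applies Cauchy--Schwarz to get $\sum_j \sqrt{m_j k^{\Delta+1}} \le \sqrt{M\sum_j m_j k^{\Delta+1}} = \sqrt{MNk^{\Delta+1}}$, which is algebraically identical to your two-step version. No issues.
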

This follows from Cauchy-Schwarz inequality:

\begin{align*}
    \sum_{j \in [M]}  \sqrt{m_j k^{\Delta+1}} &\leq \sqrt{ M \sum_{j\in [M]} m_jk^{\Delta+1}}\\&= \sqrt{ M N k^{\Delta+1}}.
\end{align*}

Corollary \ref{cor:delta} allows comparison to the state-of-the-art. 
\citet{agarwal2024multi} derived a regret upper bound of $$\tilde{\mathcal{O}} \left(\sqrt{TNk^{\Delta+ 1}}\right),$$ which is a factor $N /\sqrt{M}$ worse than PUCB-I. 
At one end, when $M = N$ (i.e., each unit forms its own partition), the gap is $\sqrt{N}$, while at the other end, when $M = 1$, it can be as large as $N$.

\begin{remark}\label{remark: comb}
    Compared to combinatorial bandits, our setting can be regarded as a special case with $\sum_{i \in [N]} k^{d_i+1}$ base arms and $k^N$ super arms where each super arm consists of $N$ arms and represents a joint treatment assignment to all $N$ units.
    At each round, the learner selects one super arm and observes outcomes for all individual units. In terms of feedback structure, this aligns with the \emph{semi-bandit} model studied in the combinatorial bandit literature, where the learner observes individual rewards for each selected base arm.
    For this setting, the tight lower bound in the combinatorial bandit literature \citep{cesa2012combinatorial, audibert2014regret, merlis2020tight} is $${\Omega} \left( \sqrt{\frac{T}{N} \sum_{i \in [N]}k^{d_i+1}} \right),$$
    which can be larger than our upper bound in Corollary \ref{cor:delta}. For example, on regular graphs where all nodes have the same degree, the gap is $\sqrt{\frac{N}{M}}$.
    This gap indicates that algorithms optimal for the general combinatorial bandit problem are not optimal in our setting. As a result, specialized algorithms are required to exploit the structure induced by network interference.
\end{remark}

If partitioning is ignored and UCBs are computed per unit, the regret bound becomes
$$ 
\mathcal{O}\left( \sqrt{ \frac{T}{N^2} \log (TN)} \sum_{i \in [N]}  \sqrt{k^{d_{i}+1}}\right).
$$  
This bound is always at least as large as the bound in Theorem \ref{thm: upper-bound} since
$$
\sum_{i \in [N]}  \sqrt{k^{d_{i}+1}} = \sum_{j \in [M]} m_j\sqrt{k^{D_j+1}} \geq \sum_{j \in [M]}  \sqrt{m_j k^{D_{j}+1}}.
$$  
This indicates that partitioning reduces the total regret from partition $P_j$ by a factor of $\sqrt{m_j}$. For example, in the case of a complete graph, this yields a $\sqrt{N}$ improvement.

\section{Lower Bounds}\label{sec: lower}

In this section, we establish the first lower bounds on the expected regret for MABs with arbitrary network interference.
We derive two distinct bounds that quantify the worst-case regret of any algorithm based on the topological properties of the underlying interference graph. 
Specifically, these properties pertain to i) the symmetries in neighborhoods of the nodes and ii) the structure of doubly-independent sets of the interference graph.
Proofs are provided in Appendix \ref{sec: proof lower}.
    
Both lower bounds have a gap with the proposed upper bound for PUCB-I in Theorem \ref{thm: upper-bound}.
Subsequently, we identify classes of graphs where these gaps are constant, showing that our algorithm is nearly optimal (up to logarithmic factors). 
The first lower bound demonstrates that PUCB-I is near-optimal for classes of dense graphs, while the second proves that it is near-optimal for sparse graphs.
Sparse graphs are particularly significant in multi-armed bandit problems with interference, as many practical applications involve interference graphs that are sparse due to limited local interactions \citep{agarwal2024multi, yang2016learning}.
The following theorem establishes our first lower bound on the expected regret.



\begin{restatable}{theorem}{theoremlowerbound} [Graph-Partitioned Regret Lower Bound] \label{them: general-lower-bound}
    Let $\G$ be a connected graph with $N$ nodes and partitions $\left\{P_1, P_2, \ldots, P_M\right\}$ with $|P_j| = m_j$.
    If $k > \frac{2^{\frac{1}{\Delta+1}}}{2^{\frac{1}{\Delta+1}} - 1}$ and $T \geq \frac{4 (k-1)^{\Delta + 1}}{M}$, then for any policy $\pi$, there exists a bandit instance with interference graph $\G$ whose  reward is distributed as $1$-Gaussian with means in $[0,1]$ such that
    \begin{align*}
        {Reg}_T(\pi) \in \Omega \left( \sqrt{\frac{T}{N^2M}} \sum_{j \in [M]} \sqrt{m_jk^{D_j+1}} \right).
    \end{align*}
\end{restatable}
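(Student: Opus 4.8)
The plan is to prove the bound by an information-theoretic change-of-measure argument that reduces the interference bandit to a collection of coupled local bandit problems, one per partition. For a partition $P_j$, every unit $i \in P_j$ shares the same neighborhood, so its reward depends only on the local configuration $A_{N(P_j)} \in [k]^{D_j+1}$; the $k^{D_j+1}$ such configurations play the role of ``effective arms'' for $P_j$. The key structural observation is that a single round reveals $m_j$ independent $1$-sub-Gaussian samples of the chosen configuration of $P_j$ (one per unit), so identifying the best configuration of $P_j$ is a $k^{D_j+1}$-armed bandit that is effectively sped up by a factor $m_j$.

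First I would fix the family of hard instances. Take a baseline $\nu_0$ in which $\mu_i \equiv 1/2$ for all units, and index the alternatives by a hidden global action $A^\star \in [k]^N$: in instance $\nu_{A^\star}$, each unit $i \in P_j$ has mean $1/2 + \Delta_j$ whenever the local configuration matches that of $A^\star$ (i.e.\ $A_{N(P_j)} = A^\star_{N(P_j)}$) and $1/2$ otherwise. Indexing by a single global $A^\star$, rather than by independent per-partition configurations, is what makes the planted optima jointly realizable despite overlapping neighborhoods: $A^\star$ simultaneously activates every bump, so it is the optimal action and the per-round reward gap of any action $A$ equals $\sum_j m_j \Delta_j\, \mathbbm{1}\{A_{N(P_j)} \neq A^\star_{N(P_j)}\}$. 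I would set the gaps to $\Delta_j \asymp \sqrt{k^{D_j+1}/(M m_j T)}$; the hypotheses $k > 2^{1/(\Delta+1)}/(2^{1/(\Delta+1)}-1)$ and $T \ge 4(k-1)^{\Delta+1}/M$ are exactly what guarantee $\Delta_j \le 1/2$ (keeping all means in $[0,1]$) while ensuring the effective number of configurations stays $\Theta(k^{D_j+1})$.

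Next I would run the divergence decomposition. Writing $S_{j,a}$ for the number of rounds in which $P_j$'s configuration equals $a$, one gets $\mathrm{KL}(\mathbb{P}_{\nu_0}\,\|\,\mathbb{P}_{\nu_{A^\star}}) = \sum_{j}\tfrac{m_j \Delta_j^2}{2}\,\mathbb{E}_{\nu_0}[S_{j, A^\star_{N(P_j)}}]$, since the two instances differ only at the planted configuration of each partition and each matching round contributes $m_j$ Gaussian samples of squared-mean-gap $\Delta_j^2$. Averaging over $A^\star$ drawn uniformly from $[k]^N$, and using that the law of the interaction under $\nu_0$ is independent of $A^\star$ while $A^\star_{N(P_j)}$ is marginally uniform over the $k^{D_j+1}$ configurations, gives $\mathbb{E}_{A^\star}\mathbb{E}_{\nu_0}[S_{j, A^\star_{N(P_j)}}] = T/k^{D_j+1}$ and hence $\mathbb{E}_{A^\star}[\mathrm{KL}] = \sum_j \tfrac{m_j \Delta_j^2 T}{2\,k^{D_j+1}}$. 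This is the heart of the matter: the single global change of measure accumulates information about all $M$ partitions at once, so the total KL budget must be shared; with the gaps above, each summand is $\Theta(1/M)$ and the total is $O(1)$.

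Finally, a Bretagnolle--Huber (or Pinsker) step converts the bounded KL into a statement about behavior: for a typical $A^\star$ the policy cannot concentrate on the correct configuration of $P_j$, so $\mathbb{E}_{\nu_{A^\star}}[\,T - S_{j,A^\star_{N(P_j)}}\,] = \Omega(T)$ for every $j$ on average. Summing the resulting per-partition regret $\tfrac{1}{N} m_j \Delta_j \cdot \Omega(T)$ and substituting $\Delta_j \asymp \sqrt{k^{D_j+1}/(M m_j T)}$ yields $\tfrac{1}{N}\sqrt{T/M}\sum_j \sqrt{m_j k^{D_j+1}}$, as claimed. The main obstacle, and the source of the extra $\sqrt{M}$ relative to what $M$ genuinely independent partitions would contribute, is precisely this coupling: overlapping neighborhoods prevent a clean per-coordinate Assouad-style perturbation and force a single global change of measure whose information budget is split across the $M$ partitions. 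The step I would be most careful with is making the averaging-over-$A^\star$ identity rigorous, so that one bounded total KL simultaneously lower-bounds the miss rate of \emph{every} partition.
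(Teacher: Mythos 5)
Your proof is correct and lands on the same core mechanism as the paper's --- a change of measure combined with the pigeonhole fact that $T$ rounds must be split among the $k^{D_j+1}$ local configurations of each partition, with the same gap choice $\Delta_j \asymp \sqrt{k^{D_j+1}/(Mm_jT)}$ --- but the execution is genuinely different. The paper starts from a base instance that already has a planted optimum at the all-ones treatment, then uses a deterministic double-counting lemma over the set $S$ of treatments avoiding arm $1$ (of size $(k-1)^N$) to exhibit a single under-explored alternative $A'$, and applies Bretagnolle--Huber once to one global event; restricting to $S$ is what keeps the two bumps disjoint on every neighborhood, and is why $(k-1)^{d_i+1}$ appears and the hypothesis on $k$ is needed to convert back to $k^{d_i+1}$. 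You instead compare a flat baseline to a family of alternatives indexed by a uniformly random planted optimum $A^\star \in [k]^N$ and average the KL over $A^\star$; this is the randomized analogue of the paper's under-exploration lemma, it avoids the restriction to $S$ entirely (the full $k^{D_j+1}$ appears directly), and it makes the ``joint realizability despite overlapping neighborhoods'' issue transparent. The price is that your final step applies Bretagnolle--Huber (or Pinsker) per partition and sums, so you must control the term $\E_{A^\star}\bigl[\pr_{\nu_0}\bigl(S_{j,A^\star_{N(P_j)}} \ge T/2\bigr)\bigr] \le 2/k^{D_j+1}$ separately for each $j$ and need it bounded away from the Bretagnolle--Huber constant; this works because the theorem's hypothesis on $k$ forces $k \ge 3$, but it is the one place your sketch leaves constants unverified (likewise, with means $1/2+\Delta_j$ the stated condition on $T$ only gives $\Delta_j \le 1/\sqrt{2}$, so you should take the baseline mean to be $0$ rather than $1/2$ to keep means in $[0,1]$). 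Neither issue is a gap in the argument, only in the bookkeeping.
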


This lower bound has a gap of $\sqrt{M}$ compared to the upper bound of PUCB-I. 
This indicates that the algorithm achieves better performance for graphs with a smaller number of partitions.

We now derive a second lower bound on the expected regret that nearly matches the upper bound of our proposed algorithm for graphs with a bounded maximum degree.

\begin{restatable}{theorem}{theoremsparselowerbound}
[Doubly-Independent Set Regret Lower Bound] \label{them: sparse-lower-bound}
    Let $\G$ be a connected graph with $N$ nodes. If $T \geq k^{\Delta + 1} - 1$, for any policy $\pi$, there exists a bandit instance with interference graph $\G$, $1$-Gaussian reward distributions and means in $[0,1]$ such that
    \begin{align} \label{eq: sparse-lower-bound}
        {Reg}_T(\pi) \in \Omega \left( \max_{S \in DI(\G)} \sqrt{\frac{T}{N^2}} \sum_{i \in S} \sqrt{k^{d_i + 1}} \right),
    \end{align}
    where $DI(\G)$ is defined in Definition \ref{def: DI}.
\end{restatable}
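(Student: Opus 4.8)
The plan is to exploit the defining feature of a doubly-independent set: if $S\in DI(\G)$, then the closed neighborhoods $\{N(i)\}_{i\in S}$ are pairwise \emph{disjoint}. Indeed, a shared vertex of $N(i)$ and $N(i')$ would force $i,i'$ to be either adjacent or to have a common neighbor, both excluded by Definition~\ref{def: DI}. Disjointness means the coordinates of an assignment $A\in[k]^N$ governing the reward of each $i\in S$ are separate, so the learner can set the sub-configuration $A_{N(i)}\in[k]^{d_i+1}$ of every $i\in S$ independently, and the noise variables attached to distinct units are independent. Restricted to $S$, the problem therefore behaves exactly like $|S|$ parallel, statistically independent stochastic bandits, the $i$-th having $K_i := k^{d_i+1}$ arms.

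First I would build a family of hard instances. Fix the maximizing set $S$. For each $i\in S$ choose a ``good'' configuration $a_i\in[k]^{d_i+1}$ and set $\mu_i(A)=\tfrac12+\Delta_i\,\mathbbm 1\{A_{N(i)}=a_i\}$ with gap $\Delta_i=\tfrac12\sqrt{(K_i-1)/T}$; every remaining unit $j\notin S$ gets the constant reward $\mu_j\equiv\tfrac12$, and all noise is standard Gaussian. The hypothesis $T\ge k^{\Delta+1}-1\ge K_i-1$ (using $d_i\le\Delta$) guarantees $\Delta_i\le\tfrac12$, so all means lie in $[0,1]$ and each $\mu_i$ depends only on $A_{N(i)}$, making the instance legitimate for $\G$. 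With this construction the optimal assignment sets $A_{N(i)}=a_i$ for all $i\in S$ simultaneously (possible by disjointness), and the normalized regret of Definition~\ref{def: reg} reduces to $\frac1N\sum_{i\in S}\Delta_i\,\E[\,T-n_{T,i}(a_i)\,]$, where $n_{T,i}(a_i)$ counts the rounds in which $i$ received its good configuration.

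The core step is to lower bound this quantity by treating each $i$ as an isolated bandit and summing. I would index the instances by $\theta=(a_i)_{i\in S}$, draw $\theta$ uniformly, and bound the worst case from below by the average $\E_\theta[Reg_T]$. The main obstacle---and the only place real work is needed---is to argue that observations from one sub-problem carry no information about another, so the per-unit analysis does not degrade. This is handled by the divergence decomposition: for two parameter vectors differing only in coordinate $i$, the KL divergence between the induced trajectory laws equals $\sum_c \E[n_{T,i}(c)]\cdot\mathrm{KL}\big(\mathcal N(\mu_i^\theta(c),1)\,\|\,\mathcal N(\mu_i^{\theta'}(c),1)\big)$, involving unit $i$ alone, since all other units' reward laws are identical across $\theta,\theta'$. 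Conditioning on $\{a_{i'}\}_{i'\ne i}$ thus reduces the situation to a single $K_i$-armed Gaussian bandit, to which the classical minimax argument (Pinsker/Bretagnolle--Huber together with Cauchy--Schwarz over the $K_i$ arms, as in the standard $\sqrt{KT}$ lower bound) applies verbatim, yielding $\E_\theta[\Delta_i(T-n_{T,i}(a_i))]\ge c\sqrt{K_iT}$ for an absolute constant $c$ once $K_i\ge 4$ (which holds for $k\ge 2$ and $d_i\ge 1$ in a connected $\G$).

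Finally I would assemble the pieces. Summing the per-unit bounds and dividing by $N$ gives $\E_\theta[Reg_T(\pi,\nu_\theta)]\ge \frac cN\sum_{i\in S}\sqrt{K_iT}=c\sqrt{T/N^2}\sum_{i\in S}\sqrt{k^{d_i+1}}$; since the maximum over $\theta$ dominates the average, there is a single witness instance $\nu_{\theta^*}$ (depending on $\pi$) attaining this bound, and optimizing the choice of $S$ over $DI(\G)$ yields the stated $\max_{S\in DI(\G)}$. The Gaussian reward assumption and the $1/N$ normalization enter only through the constants. I expect the delicate bookkeeping to be the conditional version of the information argument---making precise that the induced policy on $N(i)$, after marginalizing the irrelevant coordinates and the observations of other units, is a bona fide single-bandit policy---rather than any individual inequality.
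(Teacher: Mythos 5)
Your proposal is correct and rests on the same core observation as the paper: for $S\in DI(\G)$ the closed neighborhoods $\{N(i)\}_{i\in S}$ are pairwise disjoint, so the problem restricted to $S$ decouples into $|S|$ statistically independent $k^{d_i+1}$-armed bandits whose classical $\sqrt{K_iT}$ lower bounds can be summed. The packaging differs. The paper first proves an edge-removal lemma (deleting an edge can only make the worst case easier), deletes every edge with both endpoints outside $S$ to obtain a disjoint union of stars centered at the elements of $S$, and then invokes the classic MAB lower bound on each star component, asserting that the component regrets add. You instead construct the product family of hard instances directly on $\G$ (hidden good configuration $a_i$ per unit in $S$, constant reward elsewhere), randomize over the parameter vector, and use the divergence decomposition to show that perturbing $a_i$ changes only unit $i$'s trajectory law, so the per-unit minimax arguments run in parallel on a single instance. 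Your route is somewhat longer but is more explicit on the one point the paper glosses over: that the $|S|$ per-component lower bounds are attained \emph{simultaneously} by one witness instance rather than by $|S|$ different ones. The paper's route buys modularity (the edge-removal lemma is reused in the unknown-graph section) at the cost of leaving that simultaneity implicit. Both arguments are sound; minor bookkeeping aside (e.g.\ your claim $K_i\ge 4$ needs $N\ge 2$ so that connectivity forces $d_i\ge 1$, and the constant-reward units outside $S$ must be checked to contribute zero regret, which they do), your proof would go through.
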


The following corollary relates doubly-independent sets to the square chromatic number $\chi(\G^2)$, allowing comparison between the two lower bounds.

\begin{corollary}
    A connected graph $\G$  can be colored with $\chi(\G^2)$ distinct colors, where each color class is a doubly-independent set. 
    Therefore, the maximum in \eqref{eq: sparse-lower-bound} is at least equal to $\frac{1}{\chi(\G^2)}$ times the sum on all the nodes, implying
    \begin{align} \label{eq: scn-lower-bound}
         {Reg}_T(\pi) \in \Omega \left( \frac{1}{\chi(\G^2)} \sqrt{\frac{T}{N^2}} \sum_{i \in [N]} \sqrt{k^{d_i+1}} \right).
    \end{align}
\end{corollary}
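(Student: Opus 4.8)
The plan is to combine the definition of the square chromatic number with a simple averaging argument over the color classes, and then substitute the result into the conclusion of Theorem~\ref{them: sparse-lower-bound}.

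First I would observe that the first sentence of the corollary is immediate from Definition~\ref{def: square}: by construction, $\chi(\G^2)$ is the minimum number of colors needed to partition $[N]$ so that every color class forms a doubly-independent set. Hence there exists a partition $[N] = S_1 \sqcup S_2 \sqcup \cdots \sqcup S_{\chi(\G^2)}$ with $S_l \in DI(\G)$ for every $l \in [\chi(\G^2)]$. No additional work is required to justify this step; it is precisely the defining property of $\chi(\G^2)$.

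Next I would apply a pigeonhole argument to the nonnegative weights $w_i \coloneqq \sqrt{k^{d_i+1}}$. Since the color classes partition $[N]$, we have $\sum_{l \in [\chi(\G^2)]} \sum_{i \in S_l} w_i = \sum_{i \in [N]} w_i$. Averaging over the $\chi(\G^2)$ classes then shows that at least one class $S_{l^*}$ satisfies $\sum_{i \in S_{l^*}} w_i \geq \frac{1}{\chi(\G^2)} \sum_{i \in [N]} w_i$. Because $S_{l^*} \in DI(\G)$, it is an admissible set in the maximization of \eqref{eq: sparse-lower-bound}, and therefore $\max_{S \in DI(\G)} \sum_{i \in S} \sqrt{k^{d_i+1}} \geq \frac{1}{\chi(\G^2)} \sum_{i \in [N]} \sqrt{k^{d_i+1}}$.

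Finally, substituting this lower bound on the maximum directly into the bound guaranteed by Theorem~\ref{them: sparse-lower-bound} yields \eqref{eq: scn-lower-bound}. I do not anticipate a genuine obstacle here: the entire content reduces to an averaging inequality, and the only point requiring care is recognizing that the coloring realizing $\chi(\G^2)$ produces a feasible family of doubly-independent sets over which the maximum in Theorem~\ref{them: sparse-lower-bound} is taken, so that the pigeonhole step is applicable. This admissibility is guaranteed immediately by Definition~\ref{def: square}.
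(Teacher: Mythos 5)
Your proof is correct and follows essentially the same route as the paper: the coloring realizing $\chi(\G^2)$ partitions $[N]$ into doubly-independent color classes, an averaging (pigeonhole) argument over these classes shows one of them captures at least a $\frac{1}{\chi(\G^2)}$ fraction of $\sum_{i\in[N]}\sqrt{k^{d_i+1}}$, and that class is admissible in the maximum of Theorem~\ref{them: sparse-lower-bound}. No gaps.
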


The first lower bound has a $\sqrt{M(\G)}$ gap with the upper bound; the second has a gap of $\chi(\G^2)$.
Thus, the tighter bound depends on whether $\chi(\G^2) > \sqrt{M(\G)}$ or not.
Overall, the gap is at most $$\min\left(\sqrt{M(\G)}, \chi(\G^2)\right) $$ which is always bounded by $\sqrt{N}$.
Therefore, for graphs with constant $\sqrt{M(\G)}$ or $\chi(\G^2)$, PUCB-I achieves near-optimal regret.

\subsection{Graphs with Tight Bounds}
This section explores the classes of graphs where $\min\left(\sqrt{M(\G)}, \chi(\G^2)\right)$ is constant, yielding tight regret bounds.

To characterize the graphs for which the first lower bound shows the near-optimality of our algorithm, i.e., where $\sqrt{M(\G)}$ is constant, we define a class of dense graphs, called \textit{Clique-Sparse Graphs}.

\begin{definition}[Clique-Sparse Graph]
    A graph $\G$ with $N$ nodes is $(R,r)$-\textit{Clique-Sparse} if its nodes can be partitioned into $R$ clusters $C_1, C_2, \ldots, C_R$ such that: 
    \begin{itemize}
        \item Each cluster $C_i$ forms a complete graph, and
        \item There are at most $r$ edges between any pair of clusters.
    \end{itemize}
\end{definition}

\begin{figure}[t]
    \centering
    \begin{tikzpicture}[
        scale=0.9, 
        node/.style={
            circle, 
            draw, 
            thick, 
            minimum size=0.3cm, 
            inner sep=2pt,
            font=\small\sffamily\bfseries
        },
        clique1/.style={fill=red!40},
        clique2/.style={fill=blue!40},
        clique3/.style={fill=green!40},
        edge/.style={-},
    ]
    \node[node, clique1] (A) at (-0.8, -0.2) {A};
    \node[node, clique1] (B) at (-1.6, -1.2) {B};
    \node[node, clique1] (C) at (0, -1.2) {C};
    \node[node, clique1] (D) at (-0.8, -2.2) {D};
    
    \node[node, clique2] (E) at (3.8, -0.2) {E};
    \node[node, clique2] (F) at (2.9, -1.2) {F};
    \node[node, clique2] (G) at (4.6, -1.2) {G};
    \node[node, clique2] (H) at (3.8, -2.2) {H};
    
    \node[node, clique3] (I) at (1.3, 1.6) {I};
    \node[node, clique3] (J) at (0.5, 0.6) {J};
    \node[node, clique3] (K) at (2.1, 0.6) {K};
    
    \foreach \u/\v in {A/B, A/C, A/D, B/C, B/D, C/D}
        \draw[black] (\u) -- (\v);
    
    \foreach \u/\v in {E/F, E/G, E/H, F/G, F/H, G/H}
        \draw[black] (\u) -- (\v);
    
    \foreach \u/\v in {I/J, I/K, J/K}
        \draw[black] (\u) -- (\v);
    
    \draw[black] (A) -- (E);
    \draw[black] (D) -- (F);
    
    \draw[black] (C) -- (J);
    
    \draw[black] (F) -- (J);
    \draw[black] (F) -- (K);
    
    \end{tikzpicture}
    \caption{ A $(3,2)$-clique-sparse graph with colored clusters.}
    \label{fig: clique-sparse}
\end{figure}

For example, Figure~\ref{fig: clique-sparse} shows a $(3,2)$-clique-sparse graph, while a complete graph is $(1,0)$-clique-sparse.

The next lemma shows that if $\G$ is $(R,r)$-clique-sparse with constant $R$ and $r$, then $M(\G)$ is also constant, establishing that PUCB-I is near-optimal for this class of graphs.

 
\begin{restatable}{lemma}{cliquesparse}\label{lem: clique-sparse}
        For a $(R,r)$-clique-sparse graph $\G$, the number of partitions $M(\G)$ induced by the equivalence relation in Equation \eqref{eq: relation} satisfies:
    \begin{align*}
        M(\G) \leq R + rR(R-1).
    \end{align*}
\end{restatable}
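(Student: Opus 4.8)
The plan is to bound the number of equivalence classes of the relation $i \sim i' \iff N(i) = N(i')$ by separately accounting for the ``interior'' nodes of each clique and the ``boundary'' nodes that carry cross-cluster edges. The key observation is that within a single cluster $C_i$, which forms a complete graph, two nodes $u,v \in C_i$ satisfy $N(u) = N(v)$ precisely when they have exactly the same set of neighbors outside $C_i$ (since inside $C_i$ all nodes are mutually adjacent, the within-cluster part of their neighborhoods, together with themselves, already coincides as long as both are in $C_i$). So the partition structure is entirely governed by the external (cross-cluster) edges incident to each node.

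First I would formalize this by defining, for each node $v \in C_i$, its \emph{external signature} to be the set of cross-cluster neighbors of $v$. Then I would argue that two nodes in the same cluster are equivalent iff they have the same external signature; in particular, all nodes in $C_i$ with \emph{empty} external signature (the purely internal nodes) form a single equivalence class. This gives at most one ``internal'' class per cluster, contributing at most $R$ classes in total. Next I would count the nodes that have a nonempty external signature: every such node is an endpoint of at least one cross-cluster edge. Since there are at most $r$ edges between any ordered-or-unordered pair among the $R$ clusters, the total number of cross-cluster edges is at most $r\binom{R}{2}$, hence at most $2r\binom{R}{2} = rR(R-1)$ nodes are endpoints of some cross-cluster edge. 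Each such boundary node can, in the worst case, sit in its own singleton equivalence class, contributing at most $rR(R-1)$ additional classes. Combining the at most $R$ internal classes with the at most $rR(R-1)$ boundary classes yields $M(\G) \le R + rR(R-1)$, as claimed.

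The step requiring the most care is verifying the within-cluster equivalence characterization, namely that for $u,v \in C_i$ we have $N(u) = N(v)$ exactly when their external neighbor sets agree. One must check both that $u,v$ are adjacent (they are, since $C_i$ is complete, so the relation's connectivity requirement is met) and that the closed neighborhoods coincide: writing $N(u) = C_i \cup \mathrm{Ext}(u) \cup \{u\}$ and similarly for $v$, and using $u,v \in C_i$ together with $C_i$ complete, the internal parts already match, so $N(u)=N(v) \iff \mathrm{Ext}(u) = \mathrm{Ext}(v)$. I would state this cleanly as a sub-claim before the counting argument. A minor subtlety is to ensure that a boundary node of $C_i$ is never equivalent to a node of a different cluster $C_j$; this holds because any $u \in C_i$ has all of $C_i \setminus \{u\}$ in its neighborhood while a node of $C_j$ (for $j \ne i$) generally does not, but since we are already over-counting each boundary node as its own class, this only strengthens the bound and need not be belabored.

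I expect the main obstacle to be the bookkeeping of the counting rather than any deep difficulty: one must be careful not to double-count boundary nodes (a node incident to cross-cluster edges toward several clusters is still just one node and one potential class) and to correctly translate ``at most $r$ edges between any pair of clusters'' into a bound on the number of distinct boundary \emph{nodes}. Bounding boundary nodes by twice the number of cross-cluster edges is a safe over-estimate that cleanly produces the stated $rR(R-1)$ term, so I would adopt that route to keep the argument transparent even though a sharper vertex count might be possible.
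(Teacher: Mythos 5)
Your proposal is correct and follows essentially the same route as the paper's proof: isolate, in each clique, the nodes with no external neighbors (one equivalence class per cluster, giving the $R$ term, since all such nodes have closed neighborhood equal to the cluster itself), and bound the remaining boundary nodes by $rR(R-1)$ via the cross-cluster edge budget, counting each as at most one class. Your explicit ``external signature'' characterization is a slightly more detailed justification of the same counting, not a different argument.
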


The following lemma shows that for graphs with constant maximum degree $\Delta$, the square chromatic number is also constant, implying that PUCB-I is near optimal for these graphs as well.

\begin{restatable}{lemma}{constantchrom} \label{lem: constant-chromatic}
    For any graph $\G$ with maximum degree $\Delta$, the square chromatic number satisfies $$\chi(\G^2) \leq \Delta^2 + 1.$$
\end{restatable}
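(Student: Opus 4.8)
The plan is to prove $\chi(\G^2) \leq \Delta^2 + 1$ by exhibiting a proper coloring of the \emph{square graph} $\G^2$, i.e.\ the graph on vertex set $[N]$ in which two distinct nodes are adjacent whenever they are within distance $2$ in $\G$. The key observation is that a color class forms a doubly-independent set in $\G$ exactly when it is an independent set in $\G^2$: two nodes $i,i'$ fail the doubly-independent condition of Definition~\ref{def: DI} precisely when they are adjacent in $\G$ (distance $1$) or share a common neighbor (distance $2$), which is the same as being adjacent in $\G^2$. Hence $\chi(\G^2)$ as defined in Definition~\ref{def: square} coincides with the ordinary chromatic number of the square graph, and it suffices to bound the latter.

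First I would bound the maximum degree of $\G^2$. For any node $i$, the nodes at distance at most $2$ from $i$ (other than $i$ itself) are reached by walking to one of its at most $\Delta$ neighbors, and then from each such neighbor to one of its at most $\Delta{-}1$ other neighbors. A crude but sufficient count gives at most $\Delta + \Delta(\Delta-1) = \Delta^2$ such nodes, so the degree of every vertex in $\G^2$ is at most $\Delta^2$. I would then invoke the standard greedy-coloring fact that any graph with maximum degree $d$ is properly colorable with $d+1$ colors: order the vertices arbitrarily and assign to each the smallest color not used by its already-colored neighbors, which is always possible since each vertex has at most $d$ neighbors. Applying this with $d = \Delta^2$ yields a proper coloring of $\G^2$ using at most $\Delta^2 + 1$ colors, and translating back via the equivalence above gives $\chi(\G^2) \leq \Delta^2 + 1$.

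I do not anticipate a genuine obstacle here; the result is essentially the classical bound $\chi(H) \le \deg_{\max}(H)+1$ applied to $H = \G^2$. The only point requiring care is the correspondence in the first paragraph: one must verify that Definition~\ref{def: DI}'s two forbidden configurations (adjacency, and a shared neighbor \emph{outside} $S$) are captured exactly by adjacency in $\G^2$. A shared neighbor lying inside $S$ rather than outside is automatically excluded because two nodes with a common neighbor are themselves at distance $2$ and hence adjacent in $\G^2$, so they would already be separated by any proper $\G^2$-coloring; thus the "outside $S$" qualifier never weakens the requirement for a color class. Verifying this alignment is the main conceptual step, after which the degree count and greedy argument are routine.
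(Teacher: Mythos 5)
Your proof is correct and follows essentially the same route as the paper's: both arguments greedily color the nodes, using the count $\Delta + \Delta(\Delta-1) = \Delta^2$ on the number of nodes within distance two to show the greedy process never runs out of the $\Delta^2+1$ available colors. Your explicit framing via the square graph $\G^2$ and the classical bound $\chi(H) \le \deg_{\max}(H)+1$, together with the check that the ``outside $S$'' qualifier in Definition~\ref{def: DI} is harmless, is just a cleaner packaging of the identical argument.
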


\section{Unknown Interference Graph}\label{sec: unkown}
The assumption of knowing the interference graph may be unrealistic in certain scenarios. In this section, we study regret minimization when the graph is unknown.
We show that, without any prior information on the graph, no algorithm can uniformly outperform the one that assumes a complete graph.
For a policy $\pi$, graph $\G$, we denote by $\rtgpi{\G}$ the worst-case regret of policy $\pi$ over all instances with 1-sub-Gaussian rewards and true graph $\G$ when the graph is unknown.

A natural extension of PUCB-I to this setting is Partitioned UCB with Unknown Interference (PUCB-UI) which assumes that for any instance, the interference graph is complete and runs PUCB-I accordingly.
Clearly, this algorithm satisfies: 
$$
\forall \G: R_T(\pi_{\text{PUCB-UI}}, \G) \in \mathcal{O}\left( \sqrt{\frac{T}{N} k^N} \right),
$$  
matching the worst-case regret of PUCB-I with a known, complete graph.

The following theorem shows that no algorithm can uniformly outperform PUCB-UI across all problem instances.
First, we define a key concept.

\begin{definition}[Legit Policy] 
    A policy $\pi$ is \textit{legit} if its worst-case regret (under unknown interference) occurs when the true graph is complete, i.e.,
    \begin{align*}
        \max_{\G} \rtgpi{\G} = \rtgpi{K_N},
    \end{align*}
    where $K_N$ denotes complete graph on $N$ nodes. 
\end{definition}

\begin{restatable}{theorem}{unknownLB}[Unknown Graph Lower Bound] \label{thm: unknown-LB}
    For any legit policy $\pi$ and graph $\G$, if $k > \frac{2^{\frac{1}{N}}}{2^{\frac{1}{N}} - 1}$,
    \begin{align*}
        \rtgpi{\G} \rtgpi{K_N} \in \Omega \left( \frac{T}{N} k^N \right). 
    \end{align*}
\end{restatable}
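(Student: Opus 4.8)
The plan is to read the statement as a Pareto-frontier \emph{lower} bound (the $\mathcal O$ is to be understood as $\Omega$): for every graph-agnostic policy $\pi$ one shows $\rtgpi{\G}\,\rtgpi{K_N}\in\Omega\!\left(\frac{T}{N}k^N\right)$, so that no policy can simultaneously beat PUCB-UI on $\G$ and on $K_N$. The whole proof is a single change-of-measure between one instance on $\G$ and one instance on $K_N$ that are statistically almost indistinguishable yet have different optima, which forces a regret trade-off. The normalization from Definition~\ref{def: reg} turns each instance into a bandit over the $k^N$ assignments with per-unit average reward, and PUCB-UI attains $\Theta\!\big(\sqrt{\tfrac{T}{N}k^N}\big)$ on both graphs, so the target product is exactly the square of this rate.

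First I would fix a reference assignment $a$ (say all-ones) and build the factorized needle instance $\nu_\G$ on $\G$ with unit-variance Gaussian rewards and means $\mu_i(A)=\tfrac12+\delta_\G\,\mathbbm{1}\{A_{N(i)}=a_{N(i)}\}$. This is realizable on $\G$ because each $\mu_i$ depends only on $A_{N(i)}$, and its unique optimum is $A=a$. Since $\pi$ does not observe the graph, its law is a fixed function of the reward history, so I may run it on $\nu_\G$ and record the expected pull counts $\E_{\nu_\G}[n_b]$ of every assignment $b$. The key step is to restrict to the assignments that differ from $a$ in \emph{every} coordinate: playing any such far assignment mismatches all neighborhoods and hence incurs the full per-round gap $\delta_\G$ (no partial credit), so $\delta_\G\sum_{b\text{ far}}\E_{\nu_\G}[n_b]\le \rtgpi{\G}$. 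The hypothesis $k>\frac{2^{1/N}}{2^{1/N}-1}$ is exactly $(k-1)^N>\tfrac12 k^N$, guaranteeing more than $\tfrac12 k^N$ far assignments; by averaging there is a far $b^\dagger$ with $\E_{\nu_\G}[n_{b^\dagger}]\le \frac{2\,\rtgpi{\G}}{\delta_\G k^N}$.

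Next I would define the companion instance $\nu_K$ on $K_N$ by keeping these means and adding a global bump on the single assignment $b^\dagger$, namely $\mu_i(A)=\tfrac12+\delta_\G\,\mathbbm{1}\{A_{N(i)}=a_{N(i)}\}+\delta_K\,\mathbbm{1}\{A=b^\dagger\}$, which is realizable on the complete graph. With $\delta_K>\delta_\G$ the optimum of $\nu_K$ becomes $b^\dagger$ with gap at least $\delta_K-\delta_\G$, while the optimum of $\nu_\G$ was $a$; crucially the two instances differ only in the reward law of $b^\dagger$, so the divergence decomposition gives $\mathrm{KL}(\pr_{\nu_\G}\Vert\pr_{\nu_K})=\E_{\nu_\G}[n_{b^\dagger}]\cdot\tfrac{N\delta_K^2}{2}$, which is $O(1)$ once $\delta_K^2\lesssim \frac{\delta_\G k^N}{N\,\rtgpi{\G}}$. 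Applying Bretagnolle--Huber to the event $\{n_{b^\dagger}\ge T/2\}$, together with Markov's inequality under $\nu_\G$ (where $b^\dagger$ is rarely played), shows $\pi$ still plays $b^\dagger$ at most $T/2$ times in expectation under $\nu_K$, so $\rtgpi{K_N}\ge Reg_T(\pi,\nu_K)\gtrsim \delta_K T$. Choosing $\delta_K\asymp\delta_\G\asymp \frac{k^N}{N\,\rtgpi{\G}}$ to saturate the constant-KL budget yields $\rtgpi{K_N}\gtrsim \frac{k^N T}{N\,\rtgpi{\G}}$, i.e.\ the claimed product bound, in the large-$T$ regime assumed throughout the paper.

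I expect the main obstacle to be controlling the power of $N$. The naive factorized needle charges only $\delta_\G/N$ per suboptimal round (local partial credit on a sparse graph), which loses a factor of $N$ and gives the weaker $\frac{T}{N^2}k^N$. Restricting the averaging to the far assignments -- where the hypothesis on $k$ forces their count above $\tfrac12 k^N$ and where every suboptimal round costs the full $\delta_\G$ -- is precisely what removes this loss. The remaining care is bookkeeping: checking that the Bretagnolle--Huber/Markov estimates stay valid for the self-referential choice $\delta_K\asymp k^N/(N\,\rtgpi{\G})$ (in particular that $\delta_K\le\tfrac12$, which holds once $\rtgpi{\G}\gtrsim k^N/N$, guaranteed for large $T$ by the graph's own minimax floor), and separating out the degenerate small-$T$ regime where one of the two regrets is already linear and the product is trivially large.
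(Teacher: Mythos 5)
Your proposal is correct and follows essentially the same route as the paper: the same needle-at-all-ones instance on $\G$, the same family of confusing instances on $K_N$ supported on the $(k-1)^N > \tfrac12 k^N$ assignments avoiding arm $1$, the same divergence decomposition and Bretagnolle--Huber step, and the same (correct) reading of the stated $\mathcal{O}$ as an $\Omega$ on the product of the two regrets. The only difference is the direction of the bookkeeping: the paper tunes $\Delta \propto \rtgpi{K_N}/T$ and sums the forced pull counts over all $(k-1)^N$ confusing assignments to lower-bound $\rtgpi{\G}$, whereas you tune $\delta_\G \propto k^N/(N\,\rtgpi{\G})$ and use a pigeonhole step to extract a single under-explored assignment that lower-bounds $\rtgpi{K_N}$ --- two dual formulations of the identical argument.
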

The proof appears in Appendix \ref{sec: proofs uknown}.

\begin{corollary}
    Theorem \ref{thm: unknown-LB} implies that for a policy $\pi$, if there exists a constant $c_1$ such that $$\forall \G: \rtgpi{\G} \leq c_1 \sqrt{\frac{T}{N}k^N},$$ then there exists a constant $c_2$ such that $$\forall \G: \rtgpi{\G} \geq c_2 \sqrt{\frac{T}{N}k^N}.$$ Thus, no algorithm can uniformly outperform PUCB-UI across all instances.
\end{corollary}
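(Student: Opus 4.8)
The plan is to derive the corollary directly from the product bound established in Theorem~\ref{thm: unknown-LB}, treating it as a purely algebraic consequence rather than re-running any information-theoretic argument. Reading Theorem~\ref{thm: unknown-LB} in its lower-bound direction on the product of worst-case regrets (its natural role as an ``Unknown Graph Lower Bound''), the theorem supplies a constant $c > 0$ for which $\rtgpi{\G}\,\rtgpi{K_N} \ge c\,\frac{T}{N}k^N$ holds simultaneously for every connected graph $\G$ on $N$ nodes, under the stated condition $k > \frac{2^{1/N}}{2^{1/N}-1}$. The key feature I would exploit is that this single constant $c$ is uniform over the choice of $\G$.

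Next, I would instantiate the hypothesis of the corollary at the complete graph. By assumption there is a constant $c_1$ with $\rtgpi{\G} \le c_1\sqrt{\frac{T}{N}k^N}$ for all $\G$; in particular $\rtgpi{K_N} \le c_1\sqrt{\frac{T}{N}k^N}$. Since regret is nonnegative and the product bound forces $\rtgpi{K_N} > 0$, I can divide the product inequality by $\rtgpi{K_N}$ to isolate the regret on an arbitrary graph:
\[
\rtgpi{\G} \;\ge\; \frac{c\,\frac{T}{N}k^N}{\rtgpi{K_N}} \;\ge\; \frac{c\,\frac{T}{N}k^N}{c_1\sqrt{\frac{T}{N}k^N}} \;=\; \frac{c}{c_1}\sqrt{\frac{T}{N}k^N}.
\]
Setting $c_2 = c/c_1$ then yields $\rtgpi{\G} \ge c_2\sqrt{\frac{T}{N}k^N}$ for every $\G$, which is precisely the claimed implication.

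Finally, I would translate this into the Pareto-optimality conclusion. PUCB-UI attains $R_T(\pi_{\text{PUCB-UI}}, \G) \in \mathcal{O}\!\big(\sqrt{\frac{T}{N}k^N}\big)$ on every graph. The implication just proved shows that any policy $\pi$ whose worst-case regret stays within a constant factor of this rate on all graphs (the upper-bound hypothesis) is simultaneously bounded below by $c_2\sqrt{\frac{T}{N}k^N}$ on all graphs. Consequently such a $\pi$ cannot improve the rate on even a single instance while matching PUCB-UI everywhere else, so no algorithm can uniformly outperform PUCB-UI, which is the desired statement.

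The argument involves no genuinely hard analytic step, so the main obstacle is conceptual bookkeeping rather than estimation: one must (i) read Theorem~\ref{thm: unknown-LB} in its lower-bound direction on the product, since the corollary cannot follow from an upper bound on the product; (ii) verify that $\rtgpi{K_N}$ is strictly positive and finite before dividing, with positivity coming from the product bound and finiteness from the upper-bound hypothesis; and (iii) track that the constant $c$ from the theorem and hence $c_2 = c/c_1$ are uniform over $\G$, so the ``for all $\G$'' quantifier is preserved through the division. Getting these three points right is all that the proof requires.
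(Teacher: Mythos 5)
Your proposal is correct and takes essentially the same route as the paper, which states the corollary as an immediate algebraic consequence of Theorem~\ref{thm: unknown-LB}: instantiate the uniform upper-bound hypothesis at $K_N$ and divide the product bound $\rtgpi{\G}\,\rtgpi{K_N} \geq c\,\frac{T}{N}k^N$ by $\rtgpi{K_N}$ to get $c_2 = c/c_1$. You also correctly resolved the one genuine pitfall: the theorem's statement writes $\mathcal{O}(\cdot)$, but its proof in Appendix~\ref{sec: proofs uknown} actually derives the lower bound $\rtgpi{\G} \geq \frac{\log(3/2)}{48}\,\frac{Tk^N}{N\,\rtgpi{K_N}}$ with a constant uniform over $\G$, which is exactly the direction your argument requires.
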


\section{Experiments}\label{sec: exp}
We empirically evaluate PUCB-I (Algorithm~\ref{alg:ucb_bandit}) against the following baselines in simulations to validate our theoretical guarantees.

\textbf{Classical UCB.} This algorithm ignores the interference graph, treats each treatment in $k^{[N]}$ as an independent arm, and performs the UCB algorithm.  

\textbf{Combinatorial UCB (CUCB).} Proposed by \cite{cucb-chen2013combinatorial}, CUCB is one of the most widely used algorithms in the combinatorial bandit framework. By formulating our problem within this framework, CUCB can be applied to the MAB setting with interference. 
    
\textbf{Network Explore-Then-Commit (ETC).} Proposed by \citet{agarwal2024multi}, this algorithm runs in two phases: it first assigns treatments uniformly at random, then estimates the reward parameters via least squares regression, using the known interference graph.
In the second phase, it plays the arm with the highest estimated reward for the remaining rounds.

\textbf{Sequential Action Elimination (SAE).} Also from \citet{agarwal2024multi}, this elimination-based algorithm begins with all $k^{[N]}$ arms active. In each epoch, it pulls each active arm equally and eliminates those with poor observed performance.  

The complete implementation details are provided in the supplementary material.  
In all experiments, we set $k = 2$ and $\Delta = 3$. The total number of rounds is set to $T_{\text{max}} = 10 \cdot 2^N$, ensuring sufficient exploration of all $2^N$ arms, as required by classical UCB and SAE algorithms.  
The rewards for each unit are independently drawn from a $1$-Gaussian distribution. Additionally, the theoretical value of $\delta$ suggested in Theorem \ref{thm: upper-bound} is overly conservative in practice. 
We therefore used the fixed value $\delta=0.01$ in all experiments.
We conducted two sets of experiments, analyzing the cumulative regret as a function of $T$ and $N$. All results are averaged over $50$ independent runs, with the standard deviation of the regret displayed in the plots.

\begin{figure}[h]
    \centering
    \includegraphics[width=\linewidth]{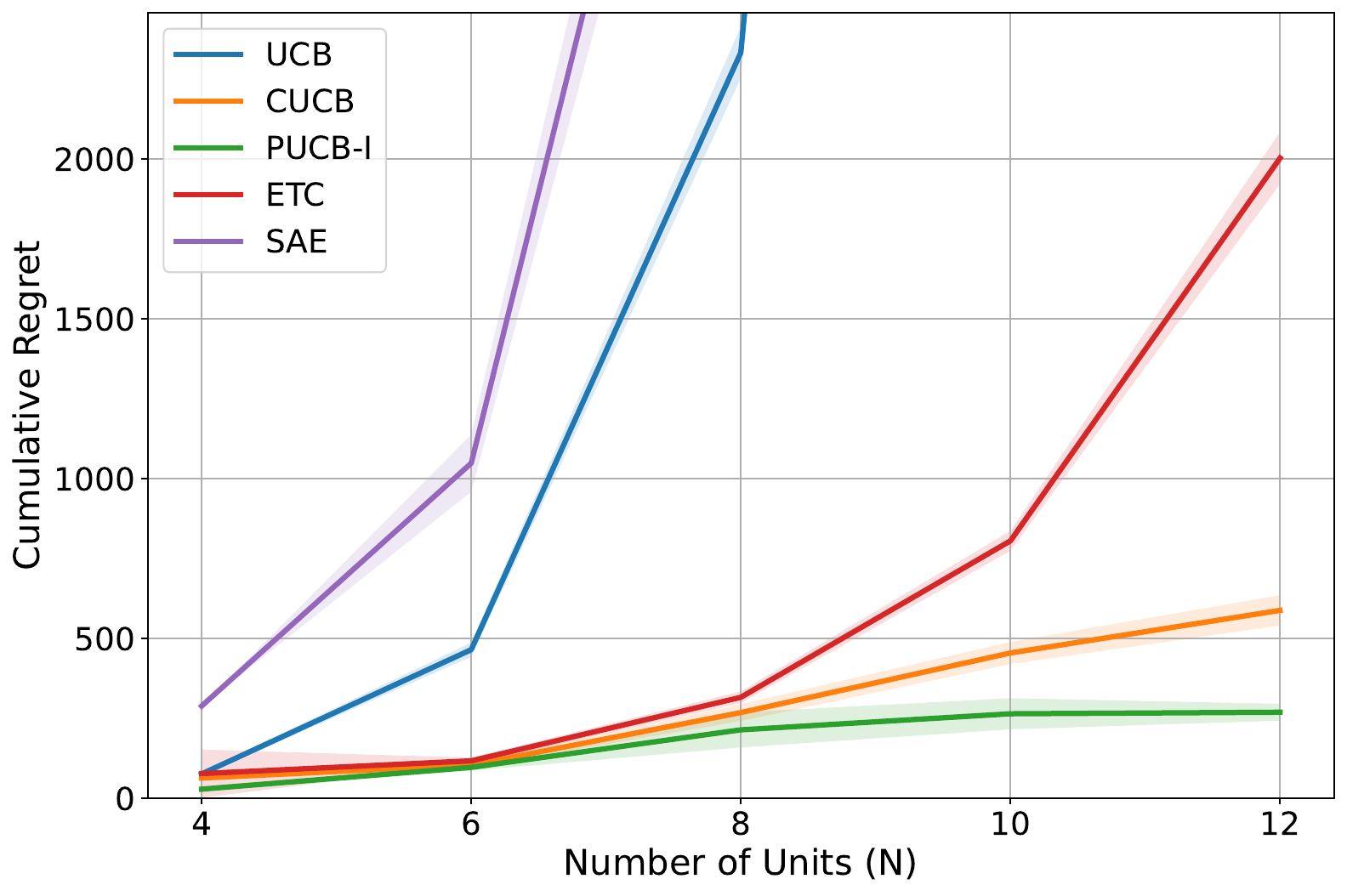}
    \caption{Average regret vs. number of units ($N$).}
    \label{fig:N}
\end{figure}

\paragraph{Scaling of Regret with $T$.}
Figures \ref{fig:T10} and \ref{fig:T12} present the cumulative regret of different algorithms over time for $N = 10$ and $N = 12$ units. The results highlight the superior performance of PUCB-I in both settings. In comparison, UCB and SAE exhibit a linear increase in regret during their long exploration phases, resulting in significantly higher overall regret and lower practical effectiveness.


\begin{figure}[h]
    \centering
    \begin{subfigure}[h]{\linewidth}
        \centering
        \includegraphics[width=\linewidth]{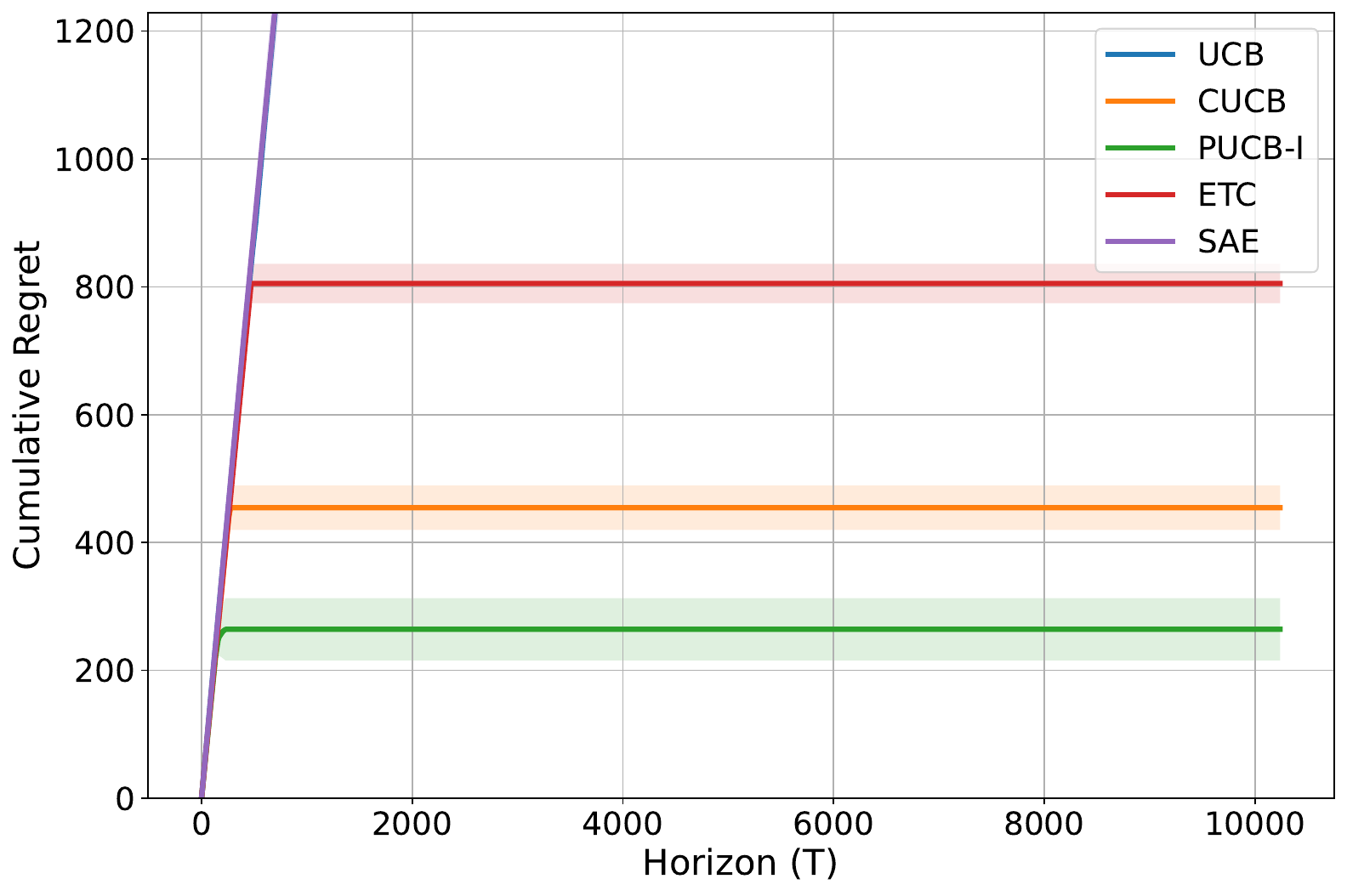}
        \caption{Average regret over time for an instance with \\$N=10, k=2$ and $\Delta = 3$.}
        \label{fig:T10}
    \end{subfigure}
    
    \begin{subfigure}[h]{\linewidth}
        \centering
        \includegraphics[width=\linewidth]{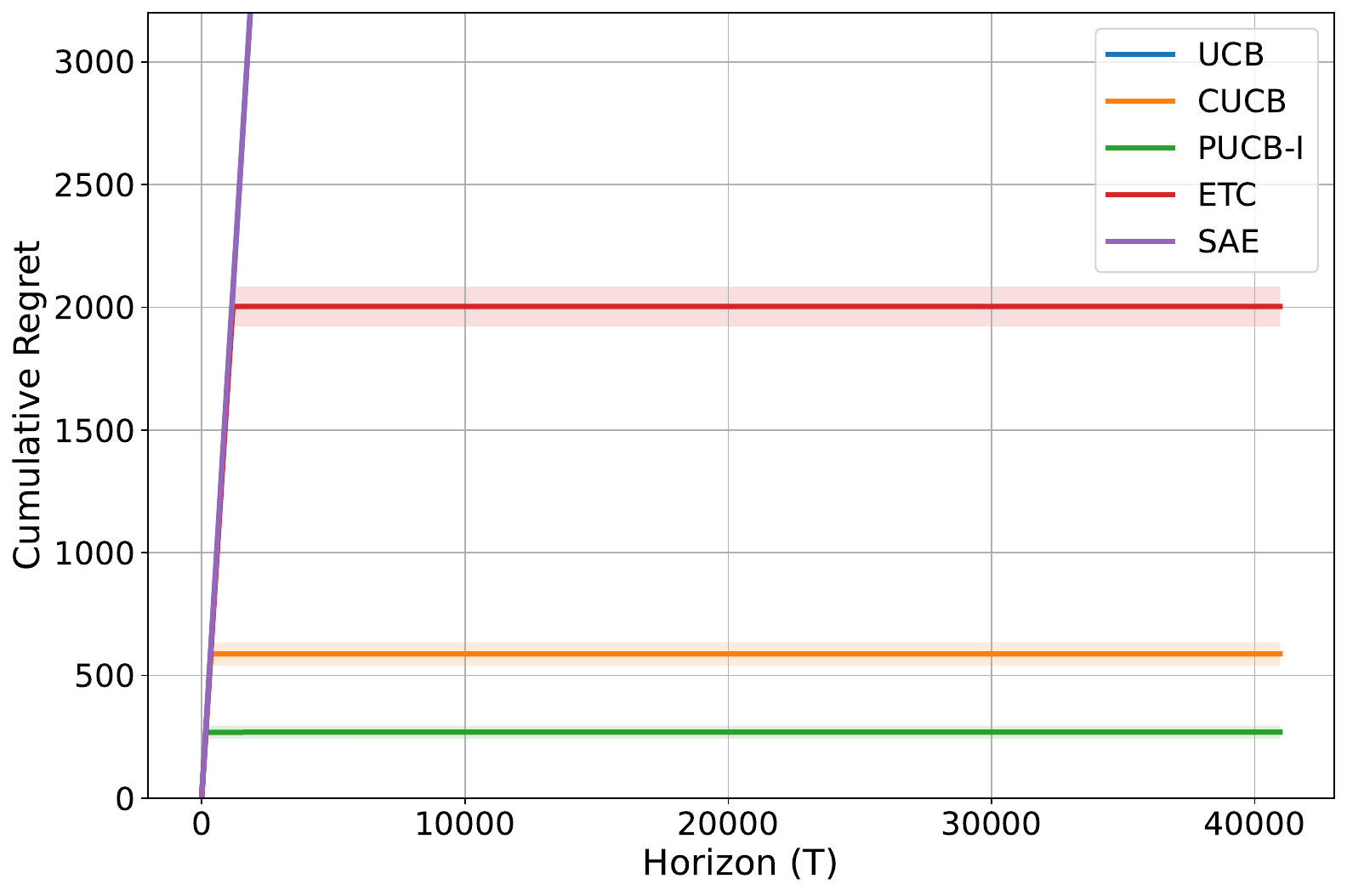}
        \caption{Average regret over time for an instance with \\$N=12, k=2$ and $\Delta = 3$.}
        \label{fig:T12}
    \end{subfigure}
    \caption{Comparison of average regret for various algorithms.}
    \label{fig:combined-regret}
\end{figure}

\paragraph{Scaling of Regret with $N$.}
To evaluate the impact of the number of units $N$, we analyzed the cumulative regret of the algorithms across instances by varying $N$ in $\{4, 6, 8, 10, 12\}$. 
As shown in Figure \ref{fig:N}, when $N$ increases, the regret of UCB grows exponentially, confirming that it was not able to incorporate the graph structure. 
In contrast, the regret of PUCB-I grows more slowly as $N$ increases compared to all other baselines because it benefits from partitioning the units with similar neighborhoods and graph-based initialization.


\section{Conclusion}
We studied the MAB problem with network interference, where each unit's reward depends on its own treatment and those of its neighbors. 
To address the resulting challenges posed by interference, we proposed the PUCB-I algorithm, which partitions units based on their neighborhood structure to minimize cumulative regret.
We derived a graph-dependent regret upper bound and established the first regret lower bounds for bandits with arbitrary network interference, showing that our proposed algorithm is near-optimal (up to logarithmic factors) for both sparse and dense graphs.
We also considered the setting where the interference graph is unknown and showed that a variant of our algorithm is Pareto optimal.
Our work highlights the importance of accounting for the graph structure in sequential decision-making under interference.

\bibliography{biblio}

\newpage
\appendix
\onecolumn
\section{Omitted Proofs}\label{apx:proofs}

\subsection{Proofs of Section \ref{sec: upper}} \label{sec: proofs upper}

\thupper*
\begin{proof}
    By the definition of $A_{t+1}$, we have: 

 \begin{equation}\label{eq: green}
      \sum_{i \in [N]} \hat{\mu}_{ti}(A_t)+ \sum_{j \in [M]} \sqrt{2\log(2/\delta) \frac{m_j}{\npjt}} \geq \sum_{i \in [N]} \hat{\mu}_{ti}(A^*)+ \sum_{j \in [M]} \sqrt{2\log(2/\delta) \frac{m_j}{\npjs}},
 \end{equation}

where $A^* \in \argmax_{A \in [k]^N} \sum_{i \in [N]} \mu_i(A)$ denotes the optimal treatment.
We define the good event $G$, ensuring that the empirical mean reward is close to the true mean for all times $t$, partitions $j$, and treatments $A$:

$$G \coloneqq \mathbbm{1} \left \{ \forall t \in [T], j \in [M], A \in [k]^N:
\left| \sum_{i \in P_j}\hat{\mu}_{ti}(A) - \mu _{ti}(A)\right| \leq \sqrt{2\log(2/\delta) \frac{m_j}{\npj}} \right \}.$$

Applying the Hoeffding's inequality, we obtain:
\begin{equation*}
    P\left(\left| \sum_{i \in P_j} \hat{\mu}_{i}(A) - \mu_{i}(A)\right| \geq \sqrt{ 2\log(2/\delta) \frac{m_j}{\npj}} \right ) \leq \delta.
\end{equation*}

By a union bound over all $t$, $j$ and $A$:
 \begin{equation*}
     \begin{aligned}
         P (G^c) & \leq \sum_{t \in [T]} \sum_{j \in [M]} \sum_{ A \in [k]^N} P \left( \left| \sum_{i \in P_j} \hat{\mu}_{ti}(A) - \mu_{ti}(A)\right| \geq \sqrt{2\log(2/\delta) {\frac{m_j}{\npj}}}  \right)\\
         & \leq T \sum_{j \in [M]} k^{D_j+1} \delta,
     \end{aligned}
 \end{equation*}
 where $D_j$ denotes the degree of each unit in the $j$-th partition.

By the law of total expectations, we can write the regret under event $G$ as:
\begin{equation*}
 \begin{aligned}
      N \cdot {Reg}_T  &= \sum_{t \in [T]} \sum_{i \in [N]} \mu_{ti}(A^*) -  \mu_{i} (A_t) \\
      & \overset{(a)}{\leq}  \sum_{t \in [T]} \sum_{i \in [N]}   \hat{\mu}_{ti}(A^*)+ \sum_{j \in [M]} \sqrt{2\log(2/\delta) \frac{m_j}{\npjs}} -  \left[ \hat{\mu}_{ti}(A_t)- \sum_{j\in [M]} \sqrt{2\log(2/\delta) \frac{m_j}{\npjt}} \right]\\
      & \overset{(b)}{\leq} 2 \sqrt{2\log(2/\delta)} \sum_{t \in [T]}\sum_{j \in [M]} \sqrt{\frac{m_j}{\npjt}}
 \end{aligned}
 \end{equation*}
 where $(a)$ and $(b)$ hold by definition of event $G$ and \eqref{eq: green}, respectively.

Next, we bound $\sum_{t \in [T]} \frac{1}{\sqrt{\npjt}}$ as

\begin{equation}
    \begin{aligned}
        \sum_{t \in [T]} \frac{1}{\sqrt{\npjt}} 
        &=  \sum_{ A \in [k]^{D_j+1+1}}  \sum_{t \in [T]} \mathbbm 1 \{ \forall i \in P_j:  A_{t N(i)} = A_{N(i)}\} \frac{1}{ \sqrt{\npj}} \\
        &= \sum_{A \in [k]^{D_j+1}} \sum_{t \in [\npjT]} \frac{1}{\sqrt{t}} \\
        &\overset{(a)}{\leq} 2 \sum_{A \in [k]^{D_j+1}} \sqrt{ \npjT} \\
        & \overset{(b)}{\leq} 2 \sqrt{k^{D_j+1}  \sum_{A \in [k]^{D_j+1}}  \npjT}\\
        & \leq 2  \sqrt{Tk^{D_j+1} }
    \end{aligned}
\end{equation}
where $(a)$ holds since $\forall n \in \mathbb{N}: \sum_{i \in [n]} 1/\sqrt{i} \leq 2 \sqrt{n}$, and $(b)$ follows from Jensen inequality.

We prove the inequality  $\sum_{i=1}^{n} \frac{1}{\sqrt{i}} \leq 2\sqrt{n} $ using integral approximation.
Since the function \( f(x) = \frac{1}{\sqrt{x}} \) is decreasing,
$$\sum_{i=1}^{n} \frac{1}{\sqrt{i}} \leq 1 + \int_1^n \frac{1}{\sqrt{x}} \,dx.
$$
Evaluating the integral,
\[
\int_1^n \frac{1}{\sqrt{x}} \,dx = 2\sqrt{n} - 2.
\]
Thus,
\[
\sum_{i=1}^{n} \frac{1}{\sqrt{i}} \leq 1 + (2\sqrt{n} - 2) \leq 2\sqrt{n}.
\]

Therefore, the regret under event $G$ is: 
\begin{equation}
     N \cdot {Reg}_T  \leq 4 \sqrt{2\log(2/\delta)T} \sum_{j \in [M]} \sqrt{m_j k^{D_j+1}} 
\end{equation}

Thus, by choosing $ \delta= \frac{1}{T^2N \sum_{j \in [M]} k^{D_j+1}}$ we get the following by the law of total probability:

\begin{equation*}
\begin{aligned}
    {Reg}_T 
     & \leq 4/{N} \sqrt{2\log(2/\delta)T} \sum_{j \in [M]}  \sqrt{m_j k^{D_j+1}}  + T (T \sum_{j \in [M]}k^{D_j+1} \delta) \\
     & \in  \mathcal{O}\left( \sqrt{ \frac{T}{N^2} \log (T^2N  \sum_{j \in [M]}k^{D_j+1})} \sum_{j \in [M]}  \sqrt{m_j k^{D_j+1}}\right).
\end{aligned}
\end{equation*}
Therefore, since $T \geq k^{\Delta+1}$, it satisfies:
\begin{equation*}
    {Reg}_T  \in  \mathcal{O}\left( \sqrt{ \frac{T}{N^2} \log (TN)} \sum_{j \in [M]}  \sqrt{m_j k^{D_j+1}}\right).
\end{equation*}

\end{proof}

\subsection{Proofs of Section \ref{sec: lower}}\label{sec: proof lower}

\theoremlowerbound*
\begin{proof}
    To prove this lower bound, we employ a change-of-measure argument, a well-known technique in the multi-armed bandit literature for deriving lower bounds. Consider an instance $\V$ with interference graph $\G$, Gaussian reward noises with unit variance, and the following reward means for each unit $i \in [N]$:
    $$
    \mui = 
    \begin{cases}
    \Delta_i & \text{if }  A_{N(i)} = (1, 1, \ldots, 1), \\
    0 & \text{otherwise,}
    \end{cases}
    $$
    
    where $\Delta_i$ is a positive real number to be determined later.
    In this case, the optimal treatment is to assign treatment $1$ to all units, yielding an expected reward of $\sum_{i \in [N]} \Delta_i$.
    
    Now, fix a policy $\pi$ operating on instance $\V$. 
    For each treatment $A \in [k]^N$, let $T_{\pi}(A) = \left(t_{\pi,1}(A), t_{\pi,2}(A), \ldots, t_{\pi,N}(A) \right)$, where $t_{\pi,i}(A)$ is the expected number of times policy $\pi$ applies a treatment assignment such that unit $i$ and its neighbors receive treatment $A_{N(i)}$ over $T$ rounds.
    For simplicity, we denote $T_{\pi}(A)$ and $t_{\pi,i}(A)$ as $T(A)$ and $t_i(A)$, respectively.
    
    Let $S \subset [k]^N$ be the set of all treatments where no unit receives treatment $1$, therefore, $|S| = (k-1)^N$.
    The following lemma implies that there is always an under-explored treatment.

    \begin{lemma}\label{lem: under-explored}
        For any policy $\pi$ and any set of values $\Delta_i$ for $i \in [N]$, there exists a treatment $A' \in S$ such that:
        \begin{align*}
            \sum_{i \in [N]} t_i(A') \Delta_i^2 \leq T \left( \sum_{i \in [N]} \frac{\Delta_i^2}{(k-1)^{d_i+1}} \right).
        \end{align*}
    \end{lemma}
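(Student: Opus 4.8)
The plan is to prove existence by an averaging argument over the set $S$: I will show that the \emph{average} of $\sum_{i \in [N]} t_i(A)\Delta_i^2$ taken over all $A \in S$ is at most the right-hand side, so that at least one $A' \in S$ must attain a value no larger than this average. This is exactly the ``under-explored treatment'' the lemma's name suggests, and pigeonhole closes the argument.

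The crucial structural observation I would record first is that $t_i(A)$ depends on $A$ only through the restriction $A_{N(i)}$ to the neighborhood of unit $i$: by definition $t_i(A)$ counts rounds in which $N(i)$ receives the configuration $A_{N(i)}$. Writing $n_i(c)$ for the expected number of rounds in which $N(i)$ receives a fixed configuration $c \in [k]^{d_i+1}$, we have $t_i(A) = n_i(A_{N(i)})$ together with the conservation identity $\sum_{c \in [k]^{d_i+1}} n_i(c) = T$, since at every round $N(i)$ receives exactly one configuration.

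Next I would evaluate $\sum_{A \in S} t_i(A)$ by grouping terms according to the induced neighborhood configuration $A_{N(i)}$. For $A \in S$ no unit receives treatment $1$, so $A_{N(i)}$ ranges over the $(k-1)^{d_i+1}$ configurations on $N(i)$ that avoid treatment $1$, and each such configuration is shared by exactly $(k-1)^{N-d_i-1}$ elements of $S$ (the remaining $N-d_i-1$ coordinates vary freely over $\{2,\dots,k\}$). Hence $\sum_{A \in S} t_i(A) = (k-1)^{N-d_i-1}\sum_{c}\, n_i(c)$, where the inner sum runs over the treatment-$1$-free configurations; bounding this restricted sum by the full sum $T$ gives $\sum_{A \in S} t_i(A) \le (k-1)^{N-d_i-1}\,T$.

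Finally I would divide by $|S| = (k-1)^N$ and interchange the order of summation, obtaining the average $(k-1)^{-N}\sum_{i}\Delta_i^2 \sum_{A \in S} t_i(A) \le T\sum_{i}\Delta_i^2/(k-1)^{d_i+1}$, and invoke the pigeonhole principle to produce the desired $A'$. There is no genuine obstacle here; the only point demanding care is the counting multiplicity $(k-1)^{N-d_i-1}$ of each neighborhood configuration among the elements of $S$, and the (harmless, since we only need an upper bound) passage from the treatment-$1$-free partial sum to the full sum $T$. Everything else is a direct double-counting and interchange of finite sums.
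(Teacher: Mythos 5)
Your proof is correct and follows essentially the same double-counting/averaging argument as the paper: group the elements of $S$ by the induced neighborhood configuration $A_{N(i)}$, count the multiplicity $(k-1)^{N-d_i-1}$, sum to get the column totals, and apply pigeonhole to the average over $|S| = (k-1)^N$ rows. If anything, you are slightly more careful than the paper at one point: you correctly bound the sum of $n_i(c)$ over treatment-$1$-free configurations by $T$ (rather than asserting equality, as the paper does), which is the direction actually needed.
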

    \begin{proof}
    To prove this lemma, we employ a double-counting technique. Consider a matrix with $(k-1)^N$ rows and $N$ columns where row $i$ corresponds to a treatment $A_i \in S$ and column $i$ corresponds to unit $i$. On the element in $j$-th row and $i$-th column of the matrix, we write $t_i(A_j) \Delta_i^2$ as follows: 
    \begin{align*}
        \mathbf{M} \coloneqq \begin{bmatrix}
        t_1(A_1)\Delta_1^2 & t_2(A_1)\Delta_2^2 & \cdots & t_N(A_1)\Delta_N^2 \\
        t_1(A_2)\Delta_1^2 & t_2(A_2)\Delta_2^2 & \cdots & t_N(A_2)\Delta_N^2 \\
        \vdots & \vdots & \ddots & \vdots \\
        t_1(A_{k-1^N})\Delta_1^2 & t_2(A_{k-1^N})\Delta_2^2 & \cdots & t_N(A_{k-1^N})\Delta_N^2
    \end{bmatrix}.
    \end{align*}
    
    To calculate the sum of the numbers in the first column, we have
    \begin{align*}
        \sum_{A_i \in S} t_1(A_i) \Delta_1^2 = \Delta_1^2 (k-1)^{N - (d_i+1)} \sum_{A_{N(1)} \in S_1} t( A_{N(1)}),
    \end{align*}
    where $S_1 = \{2,3,\dots,k\}^{d_i+1}$ is the set of all treatments that can be assigned to unit one and its neighbors without using treatment $1$ and $t(A_{N(1)})$ is the expected number of times that unit one and its neighbors are assigned $A_{N(1)}$ during $T$ rounds of interaction between $\pi$ and $\V$. The equation is true because of the symmetry in the problem which implies that each combination of the treatment of unit $1$ and its neighbors exists in $(k-1)^{N - (d_i+1)}$ number of members of $S$. On the other hand, we have
    \begin{align*}
        \sum_{A_{N(1)} \in S_1} t(A_{N(1)}) = T.
    \end{align*}
    
    Writing the same equation for all the columns implies that the sum of the numbers in the whole matrix is equal to
    \begin{align*}
        T \sum_{i \in [N]} \Delta_i^2 (k-1)^{N - (d_i+1)}.
    \end{align*}
    
    Dividing this number by the number of rows shows that there exists a row $j$ such that
    \begin{align*}
        \sum_{i \in [N]} t_i(A_j)\Delta_i^2 \leq T \left( \sum_{i \in [N]} \frac{\Delta_i^2}{(k-1)^{d_i+1}} \right). 
    \end{align*}
\end{proof}

    Now, based on the under-explored treatment $A'$, we design a \textit{confusing} instance $\V'$ which for each treatment $A$ has the mean expected rewards as follows: 
    \begin{align*}
        \mu'(A_{N(i)}) = \begin{cases}
            \Delta_i \quad &\text{ if } A_{N(i)} = (1,1,\ldots, 1), \\ 
            2\Delta_i \quad &\text{ if } A_{N(i)} = A'_{N(i)}, \\
            0 \quad &\text{ otherwise}. 
        \end{cases}
    \end{align*}
    We denote $\pr_{\V}$ and $\pr_{\V'}$ the probability measures over the bandit model induced by $T$ rounds of interaction between the policy $\pi$ and the instances $\V$ and $\V'$, respectively.
    In this case, using the divergence decomposition lemma \citep[see Lemma $15.1$ in][]{lattimore2020bandit} and the fact that the KL-divergence (denoted by $D_{KL}(\cdot \| \cdot)$ ) between two $1$-Gaussian distributions with means $\mu_1$ and $\mu_2$ is equal to $(\mu_1 - \mu_2)^2 / 2$, we derive the following equality
    \begin{align}
        D_{KL}(\pr_{\V} \| \pr_{\V'}) = \sum_{i \in [N]} t_i(A') \frac{(2\Delta_i)^2}{2} = 2 \sum_{i \in [N]} t_i(A') \Delta_i^2.
    \end{align}
    Using Lemma \ref{lem: under-explored}, we have
    \begin{align} \label{eq: kl-bound}
        D_{KL}(\pr_{\V}\| \pr_{\V'}) \leq 2T \left( \sum_{i \in [N]} \frac{\Delta_i^2}{(k-1)^{d_i+1}} \right). 
    \end{align}

    For each $t \in [T]$, define the event $\ev_t$ as:  
    $$
    \ev_t \coloneqq \left\{ \mathbb{E}\left[\sum_{i \in \ocal_t} \Delta_i \right] \geq \frac{1}{2} \sum_{i \in [N]} \Delta_i \right\},
    $$
    where $\ocal_t \coloneqq \{ i \mid A_{tN(i)} = (1, \ldots, 1) \}$  denotes the set of units $i$ that receive treatment $1$ at time $t$, along with all their neighbors.  
    This event signifies that in round $t$, the expected sum of $\Delta_i$ for such units $i$ is at least half of the total $\sum_{i \in [N]} \Delta_i$.
    We further define the event $\ev$ as:
    \begin{align*}
        \ev \coloneqq \left\{ \sum_{t \in [T]} \mathbbm{1}_{\ev_t} \geq \frac{T}{2} \right\},
    \end{align*}
    indicating that $\ev_t$ occurs in at least half of the $T$ rounds.
    Using the event $\ev$, we can bound the regret of policy $\pi$ for both instances $\V$ and $\V'$. 
    For $\V$ we have:
    \begin{align*}
        {Reg}_T(\pi, \V) &\geq \pr_{\V}(\ev^c)  \left({Reg}_T(\pi, \V) \mid \ev^c \right) \\
        & \geq \pr_{\V}(\ev^c) \frac{T}{4N} \sum_{i \in [N]} \Delta_i,
    \end{align*}
    where, with a slight abuse of notation, we use ${Reg}_T(\pi, \V) \mid \ev^c$ to denote regret (which is an expectation) conditioned on the event $\ev^c$. The second line holds because, on $\ev^c$, the algorithm incurs a regret of at least $\sum_{i \in [N]} \Delta_i$ in at least half of the rounds. Consequently, the regret of $\pi$ interacting with $\V$ is at least $\frac{T}{4N} \sum_{i \in [N]} \Delta_i$.

    Similarly, for $\V'$, the regret of $\pi$ can be bounded using $\ev$ as:
    \begin{align} \label{eq: lower-reg-V}\nonumber
        {Reg}_T(\pi, \V') &\geq \pr_{\V'}(\ev)  \left({Reg}_T(\pi, \V')  \mid \ev \right) \\ 
        & \geq \pr_{\V'}(\ev) \frac{T}{4N} \sum_{i \in [N]} \Delta_i.
    \end{align}
    On the other hand, Bretagnolle-Huber inequality \citep{vanderVaart1996convergence} implies:
    \begin{align} \label{eq: lower-reg-V'} \nonumber
        \pr_{\V}(\ev^c) + \pr_{\V'}(\ev) &\geq \frac{1}{2} \exp(-D_{KL}(\pr_{\V} \| \pr_{\V'})) \\ 
        &\geq \frac12 \exp{\left( - 2 T \sum_{i \in [N]} \frac{\Delta_i^2}{(k-1)^{d_i+1}} \right)},
    \end{align}
    where the second line holds by Equation \eqref{eq: kl-bound}.
    From Equations \eqref{eq: lower-reg-V} and \eqref{eq: lower-reg-V'}, we have:
    \begin{align*}
        {Reg}_T(\pi, \V) + {Reg}_T(\pi, \V') \geq \frac{T}{8N} (\sum_{i \in [N]} \Delta_i) \exp{\left( - 2T \sum_{i \in [N]} \frac{\Delta_i^2}{(k-1)^{d_i+1}} \right)}.
    \end{align*}

    Next, for each $j \in [M]$ and each $l \in P_j$, set the value of $\Delta_l$ to:
    \begin{align*}
        \Delta_l = \sqrt{\frac{(k-1)^{D_j+1}}{TMm_j}},
    \end{align*}
    which satisfies 
    \begin{align*}
        T \sum_{i \in [N]} \frac{\Delta_i^2}{(k-1)^{d_i+1}} = T \sum_{j \in [M]} \sum_{l \in P_j} \frac{1}{Mm_j} = T \sum_{j \in [M]} \frac{1}{M} = 1,
    \end{align*}
    and
    \begin{align*}
        \sum_{i \in [N]} \Delta_i &= \sqrt{\frac{1}{TM}} \sum_{j \in [M]}  \sum_{l \in P_j} {\sqrt{\frac{(k-1)^{D_j+1}}{m_j}}} \\ 
        & = \sqrt{\frac{1}{TM}} \sum_{j \in [M]} \sqrt{m_j (k - 1)^{D_j + 1}}. 
    \end{align*}

    Note that by the assumption $T \geq \frac{4 (k-1)^{\Delta + 1}}{M}$, we have $\Delta_i \leq \frac12$ which implies that the mean rewards are in $[0,1]$.
    
    Now, note that if $k > \frac{2^{\frac{1}{\Delta+1}}}{2^{\frac{1}{\Delta+1}} - 1}$, then $(k-1)^{D_j+1} > \frac12k^{D_j+1}$.
    Using this, there exists a universal constant $C$ such that at least one of ${Reg}_T(\pi, \V)$ and $ {Reg}_T(\pi, \V')$ is greater than
    \begin{align*}
        C \left( \sqrt{\frac{T}{N^2M}} \sum_{j \in [M]} \sqrt{m_j k^{D_j+1}} \right). 
    \end{align*}
    
Therefore, 

   \begin{align*}
        {Reg}_T(\pi) \in \Omega \left( \sqrt{\frac{T}{N^2M}} \sum_{j \in [M]} \sqrt{m_jk^{D_j+1}} \right).
    \end{align*}
    
\end{proof}

\theoremsparselowerbound*
\begin{proof}
To prove this theorem, we first provide two important lemmas.
\begin{restatable}{lemma}{edgeremoval} \label{lem: edge-removal}
        For a graph $\G$, let $\G'$ be the graph obtained after removing an arbitrary edge from $\G$.
        Then, for any policy $\pi$, the worst-case regret of interacting with bandit instances on $\G$ is at least that of interacting with bandit instances on $\G'$.
        That is,
        $$\sup_{\V \sim \G} {Reg}_T(\pi, \V) \geq \sup_{\V' \sim \G'} {Reg}_T(\pi, \V'),$$
        where $V \sim \G$ denotes a bandit instance whose interference graph is $\G$.
    \end{restatable}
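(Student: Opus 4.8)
The plan is to prove the inequality by an \emph{embedding} argument: I will show that every bandit instance on $\G'$ can be reinterpreted, without changing anything the policy observes, as a bandit instance on $\G$ with exactly the same regret. Since $\{\V \sim \G\}$ then contains a regret-equivalent copy of each $\V' \sim \G'$, the supremum over the larger family dominates the supremum over the smaller one, which is precisely the claim.

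Concretely, let $(u,v)$ be the edge removed from $\G$ to obtain $\G'$. For every $i \notin \{u,v\}$ the neighborhoods agree, $N_\G(i) = N_{\G'}(i)$, while $N_\G(u) = N_{\G'}(u) \cup \{v\}$ and $N_\G(v) = N_{\G'}(v) \cup \{u\}$; in all cases $N_{\G'}(i) \subseteq N_\G(i)$. Given an instance $\V' \sim \G'$ with mean reward functions $\mu'_i$ and $1$-sub-Gaussian noise, I would define an instance $\V \sim \G$ by setting, for every $i$ and every full assignment $A \in [k]^N$, the reward law of unit $i$ under $A$ in $\V$ equal to its reward law under $A$ in $\V'$. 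Because $\mu'_i(A)$ depends only on $A_{N_{\G'}(i)}$, which is a subvector of $A_{N_\G(i)}$, this $\V$ is a legitimate instance on $\G$: the definition of a $\G$-instance only \emph{permits} unit $i$'s reward to depend on $A_{N_\G(i)}$, so a reward that happens to ignore the extra coordinate at $u$ and $v$ is still admissible, with means in $[0,1]$ and the same sub-Gaussian noise.

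The key step is to observe that $\pi$ cannot distinguish $\V$ from $\V'$. Since $\pi$ is a fixed sequence of maps from histories to distributions over $[k]^N$ (independent of the graph), and since for each action $A$ the per-unit reward distributions coincide under $\V$ and $\V'$, the law of the interaction trajectory $(A_1, Y_1, \dots, A_T, Y_T)$ is identical under $\pr_\V$ and $\pr_{\V'}$. Consequently the policy's expected cumulative reward is the same for both instances. Moreover the hindsight optimum is unchanged, since $\mu_i(A) = \mu'_i(A)$ for every $A$ gives $\max_{A} \sum_{i} \mu_i(A) = \max_{A} \sum_{i} \mu'_i(A)$. Combining these two facts yields $Reg_T(\pi, \V) = Reg_T(\pi, \V')$.

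Finally I would take suprema. For each $\V' \sim \G'$ the construction produces a $\V \sim \G$ with $Reg_T(\pi, \V) = Reg_T(\pi, \V')$, whence $\sup_{\V \sim \G} Reg_T(\pi, \V) \geq Reg_T(\pi, \V')$; taking the supremum over all $\V' \sim \G'$ gives the stated inequality. The only genuinely delicate point is the one I flagged above: checking that $\V$ is a bona fide instance on $\G$ (that dependence on a strict subset of the $\G$-neighborhood is allowed) and that the embedding simultaneously preserves the observational law of the policy and the hindsight optimum. Everything else is immediate once the two instances are coupled.
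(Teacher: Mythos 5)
Your proposal is correct and follows essentially the same route as the paper's proof: given any instance on $\G'$, construct an instance on $\G$ with identical per-unit reward laws (so the removed edge is present but inert), observe that the policy's interaction and the hindsight optimum are unchanged, and take suprema. Your write-up is simply a more careful version of the paper's argument, explicitly verifying that the embedded instance is admissible on $\G$ and that the trajectory distributions coincide.
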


\begin{proof}
    Assume $\G'$ is obtained by removing an edge between nodes $i$ and $j$. To prove this lemma, consider any instance $\V'$ with the interference graph $\G'$. 
    Construct an instance $\V$ with the interference graph $\G$ such that the reward distribution for each treatment $A_{N(i)}$ for unit $i$ is identical to its distribution in $\V'$.

    In this construction, the reward distribution for the unit $i$ becomes independent of the node $j$, and the edge $i - j$ has no impact on the rewards. Consequently, no algorithm can achieve a lower regret on $\V$ compared to $\V'$. This implies that for every instance with the interference graph $\G'$, a harder instance exists with the interference graph $\G$, which completes the proof.
\end{proof}


    \begin{restatable}{lemma}{onenodelowerbound} \label{lem: one-node-lower-bound}
        Let $\G$ be a graph with $N$ nodes where $d_i$ is the degree of node $i$.
        If $T \geq k^{d_i+1} - 1$, for any policy $\pi$ and each $i \in [N]$, there exists a bandit instance with interference graph $\G$, $1$-Gaussian reward distributions and means in $[0,1]$ such that:
        \begin{align*}
            {Reg}_T(\pi) \in \Omega \left( \sqrt{\frac{T k^{d_i+1}}{N^2}} \right).
        \end{align*}
    \end{restatable}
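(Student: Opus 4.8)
The plan is to specialize the change-of-measure argument used for Theorem~\ref{them: general-lower-bound} to a single node $i$, so that only the neighborhood $N(i)$ carries any signal. I would construct a base instance $\V$ on $\G$ with $1$-Gaussian rewards in which every unit other than $i$ has mean reward $0$ for every treatment, while unit $i$ has mean $\mu_i(A_{N(i)}) = \Delta$ when $A_{N(i)} = (1,\dots,1)$ and $0$ otherwise, for a gap $\Delta$ to be fixed later. Since the other units are flat, the per-round optimal reward is exactly $\Delta$, attained by assigning $(1,\dots,1)$ to $N(i)$, and the regret is governed entirely by how often $\pi$ plays this configuration on $N(i)$.

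Writing $t(c)$ for the expected number of rounds in which $A_{t,N(i)} = c$ under $\pi$ interacting with $\V$, we have $\sum_{c \in [k]^{d_i+1}} t(c) = T$. Since there are $k^{d_i+1}-1$ configurations $c \neq (1,\dots,1)$ and they receive at most $T$ plays in total, a pigeonhole step yields an under-explored configuration $c^\ast \neq (1,\dots,1)$ with $t(c^\ast) \le T/(k^{d_i+1}-1)$. I then define the confusing instance $\V'$, identical to $\V$ except that $\mu'_i(c^\ast) = 2\Delta$, so that $c^\ast$ becomes optimal on $\V'$. By the divergence decomposition (Lemma~15.1 in \citet{lattimore2020bandit}), the two instances differ only in unit $i$'s reward at the single configuration $c^\ast$, hence
\[
D_{KL}(\pr_{\V} \,\|\, \pr_{\V'}) = t(c^\ast)\,\tfrac{(2\Delta)^2}{2} \le \frac{2\Delta^2 T}{k^{d_i+1}-1}.
\]

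Choosing $\Delta = \tfrac12\sqrt{(k^{d_i+1}-1)/T}$ makes this KL at most $\tfrac12$, and the horizon assumption $T \ge k^{d_i+1}-1$ guarantees $2\Delta \le 1$, so all means lie in $[0,1]$ on both instances. To turn indistinguishability into regret I would use the event $\ev = \{\,N_{\mathbf 1} \ge T/2\,\}$, where $N_{\mathbf 1}$ and $N_{c^\ast}$ count the rounds in which $N(i)$ receives $(1,\dots,1)$ and $c^\ast$, respectively. On $\V$, the complement $\ev^c$ forces at least $T/2$ rounds off the optimal configuration, so $Reg_T(\pi,\V) \ge \tfrac{\Delta T}{2N}\pr_{\V}(\ev^c)$; on $\V'$ the optimal configuration is $c^\ast$, and since $N_{\mathbf 1}+N_{c^\ast}\le T$ the event $\ev$ forces $N_{c^\ast}\le T/2$, giving $Reg_T(\pi,\V') \ge \tfrac{\Delta T}{2N}\pr_{\V'}(\ev)$. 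Adding these and applying the Bretagnolle-Huber inequality with the KL bound gives $\pr_{\V}(\ev^c) + \pr_{\V'}(\ev) \ge \tfrac12 e^{-1/2}$, so at least one of the two instances has regret $\Omega(\Delta T/N) = \Omega\!\big(\sqrt{T(k^{d_i+1}-1)}/N\big) = \Omega\!\big(\sqrt{Tk^{d_i+1}}/N\big)$, which is the claim.

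The part needing the most care is the \emph{simultaneous} lower bound: the single configuration $c^\ast$ and the single event $\ev$ must pin down the regret on both instances at once, and the divergence decomposition must be verified to involve only the one differing configuration. This last point is precisely what produces the $k^{d_i+1}$ scaling here, in contrast to the partition-based scaling of Theorem~\ref{them: general-lower-bound}. The remaining bookkeeping — the pigeonhole step and checking $\Delta \le \tfrac12$ from $T \ge k^{d_i+1}-1$ — is routine.
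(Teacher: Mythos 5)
Your proof is correct and is, at heart, the same argument as the paper's: the paper observes that unit $i$'s problem is a classic $K$-armed bandit with $K = k^{d_i+1}$ arms (all other units being irrelevant) and directly invokes the minimax lower bound $\Omega(\sqrt{TK})$ from \citet{lattimore2020bandit}, whereas you re-derive that bound inline via the standard pigeonhole/change-of-measure/Bretagnolle--Huber route. Your unpacked version has the minor advantage of making explicit where the hypothesis $T \geq k^{d_i+1}-1$ and the constraint that means lie in $[0,1]$ enter, which the paper's citation-based proof leaves implicit.
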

\begin{proof}
    To prove this lemma, observe that the reward distribution of unit $i$ for each treatment $A_{N(i)}$ assigned to unit $i$ and its neighbors is independent of other treatments. This allows us to construct a corresponding classic multi-armed bandit instance with $k^{d_i+1}$ arms, where each arm represents a treatment $A_{N(i)}$ and follows the same reward distribution.

    In this scenario, based on the classic lower bound in the multi-armed bandit literature \citep{lattimore2020bandit}, for any policy interacting with a bandit with $K$ arms, there exists an instance where the regret ${Reg}_T(\pi)$ is at least $\sqrt{TK}$. Applying this to our problem, where total regret is defined as the average regret of units, implies that for any policy $\pi$, there exists an instance with the interference graph $\G$ such that:
    $$
    {Reg}_T(\pi) = \Omega \left( \sqrt{\frac{T k^{d_i+1}}{N^2}} \right).
    $$
\end{proof}

We prove the theorem using the aforementioned lemmas as follows.
    For any set $S = \{s_1, s_2, \ldots, s_m\} \in DI(\G)$, let $\G_S$ be the graph obtained from $\G$ by removing all edges between nodes that are outside of $S$ (i.e., edges where neither endpoint belongs to $S$).
    By Lemma \ref{lem: edge-removal}, the worst-case regret for instances with interference graph $\G$ is at least as large as the worst-case regret for instances with $\G_S$. 
    The graph $\G_S$ consists of edges where at least one endpoint lies in $S$.
    Since $S \in DI(\G)$, the resulting graph $\G_S$ consists of $m$ disjoint connected components $\G_{s_1}, \G_{s_2} \ldots, \G_{s_m}$.
    Each component $\G_{s_i}$ is a star graph with $d_{s_i} + 1$ nodes.
    
    By Lemma \ref{lem: one-node-lower-bound}, for any policy $\pi$ interacting with $\G_{s_i}$, there exists a bandit instance where:
    \begin{align*}
        {Reg}_T(\pi) \in \Omega \left( \sqrt{\frac{T k^{d_{s_i}+1}}{N^2}} \right).
    \end{align*}
    Since the components $\G_{s_i}$ are disjoint, the total regret over $\G_S$ equals the sum of the regrets for each component.
    Therefore, for any policy $\pi$ and any $S \in DI(\G)$, there exists an instance such that:
    \begin{align*}
        {Reg}_T(\pi) \in \Omega \left( \sqrt{\frac{T}{N^2}} \sum_{i \in S} \sqrt{k^{d_i + 1}} \right).
    \end{align*}
 This concludes the proof.
 \end{proof}

\cliquesparse*
\begin{proof}
    For each cluster $C_i$, let $|C_i| = c_i$. It is given that the number of edges between the nodes in $C_i$ and $C_j$ is at most $r$ for each pair $i, j$. This implies that, within cluster $C_i$, there are at most $r(R-1)$ nodes with neighbors outside of $C_i$. Let $D_i$ denote the set of nodes in $C_i$ that have no neighbors outside $C_i$. Thus, we have:

    $$
    |D_i| \geq c_i - r(R-1).
    $$
    
    Since $C_i$ is a complete graph, it holds that:  
    $$
    \forall l \in D_i: N(l) = C_i.
    $$
    
    This indicates that for all nodes $l \in D_i$, the set $N(l)$ is identical, which implies that all such nodes belong to the same partition. Furthermore, there are at most $r(R-1)$ nodes in $C_i \setminus D_i$, meaning that in total these nodes can form:  
    $$
    \sum_{i \in [R]} |C_i \setminus D_i| \leq \sum_{i \in [R]} r(R-1) = rR(R-1)
    $$  
    distinct partitions. This implies that the total number of partitions $M(\G)$ is bounded as:  
    $$
    M(\G) \leq R + rR(R-1).
    $$

\end{proof}

\constantchrom*
\begin{proof}
    To prove the lemma, we provide a coloring method using at most $\Delta^2 + 1$ colors, ensuring that two nodes with the same color are neither adjacent nor share a common neighbor. Consider an arbitrary order $\{v_1, v_2, \ldots, v_N\}$ on the nodes, and let the available colors be $\{c_1, c_2, \ldots, c_{\Delta^2+1}\}$. Start with $v_1$, and for each $v_i$, assign the smallest color $c_j$ such that no node already colored with $c_j$ is adjacent to $v_i$ or shares a common neighbor with $v_i$. 
    
    To prove that this coloring method is valid and does not require more than $\Delta^2 + 1$ colors, assume the process stops at $v_i$ because no valid color is available. This would mean $v_i$ has more than $\Delta^2$ nodes that are either adjacent to it or share a common neighbor. 
    However, since the maximum degree is $\Delta$, $v_i$ has at most $\Delta$ neighbors, and each neighbor can have at most $\Delta - 1$ other neighbors. This totals at most $\Delta^2$ nodes, contradicting the assumption that more than $\Delta^2$ nodes are involved. Thus, the coloring method works as intended.
    This shows that for constant values of $\Delta$ (independent of $N$ and $k$), the value $\Delta^2 + 1$ is also constant.
    
\end{proof}

\subsection{Proofs of Section \ref{sec: unkown}} \label{sec: proofs uknown}

\unknownLB*
\begin{proof}
    Fix an arbitrary graph $\G$ and consider the instance $\V$ with interference graph $\G$, 1-Gaussian rewards, and the following reward means for each unit $i$ under treatment $A$,
    $$
    \mui = 
    \begin{cases}
    \Delta & \text{if }  A_{N(i)} = (1, 1, \ldots, 1), \\
    0 & \text{otherwise,}
    \end{cases}
    $$
    where $\Delta$ is a positive real number to be determined later. The optimal treatment is to assign treatment $1$ to everyone, which yields an expected reward of $N\Delta$.  
    
    Let $S \subset [k]^N$ denote the set of all treatments where no unit receives treatment $1$. For each treatment $A' \in S$, we construct the instance $\V_{A'}$ with the following properties: the interference graph is $K_N$ (complete graph with $N$ nodes), $1$-Gaussian rewards with means
    \begin{align*}
        \mu'(A_{N(i)}) = \begin{cases}
            \Delta \quad &\text{ if } A_{N_{\G}(i)} = (1,1,\ldots, 1), \\ 
            2\Delta \quad &\text{ if } A = A', \\
            0 \quad &\text{ otherwise}, 
        \end{cases}
    \end{align*}
    where $A_{N_{\G}(i)}$ denotes the treatment assigned to the neighborhood of node $i$ in graph $\G$. The instance $\V_{A'}$ differs from the base instance $\V$ only in the units' rewards assigned to treatment $A'$, which is now the optimal treatment in $\V_{A'}$
    
    Now consider a legit policy $\pi$. During the interaction of $\pi$ with a problem instance under an unknown interference graph, we define the event $\mathcal{E}$ as:

    \begin{align*}
        \mathcal{E} = \sum_{t \in [T]} n_t \geq \frac{TN}{2},
    \end{align*}
    where $n_t$ denotes the expected number of nodes $i$ in round $t$ whose entire neighborhood receives treatment $1$. Note that under $\mathcal{E}$, the regret of $\pi$ when interacting with $\V_{A'}$ is at least $\frac{T\Delta}{2}$, and under $\mathcal{E}^c$, the regret of $\pi$ interacting with $\V$ is at least $\frac{T\Delta}{2}$. This shows that
    \begin{align*}
        \rtgpi{\G} + \rtgpi{K_N} \geq \rtgpi{\V} + \rtgpi{\V_{A'}} \geq \frac{T \Delta}{2} \left( \pr_{\V}(\mathcal{E}^c) + \pr_{\V_{A'}}(\mathcal{E}) \right), 
    \end{align*}
    where $\pr_{\V}, \pr_{\V_{A'}}$ denote the probability measures over the bandit model induced by $T$ rounds of interaction of the policy $\pi$ and instances $\V$ and $\V'$, respectively, when the interference graph is unknown.
    
    By Bretagnolle-Huber inequality \citep{vanderVaart1996convergence}, we have
    \begin{align*}
        \pr_{\V}(\mathcal{E}^c) + \pr_{\V_{A'}}(\mathcal{E}) \geq \frac12 \exp{\left(-D_{KL} (\pr_{\V} \| \pr_{\V_{A'}}) \right)}.
    \end{align*}

    Since $\V$ and $\V_{A'}$ only differ in the mean rewards of treatment $A'$, the KL divergence between their distributions can be computed as follows using the KL formula for two $1$-Gaussian distributions:
    \begin{align*}
        D_{KL} (\pr_{\V} \| \pr_{\V_{A'}}) = N \frac{(2 \Delta)^2}{2} \E_{\V}[T_{A'}],
    \end{align*}
    where $\E_{\V}[T_{A'}]$ shows the expected number of times $\pi$ plays $A'$ on instance $\V$. Then we have: 
    \begin{align*}
        2 \rtgpi{K_N} \geq \rtgpi{\G} + \rtgpi{K_N} \geq \frac{T\Delta}{4} \exp\left( - 2 N \Delta^2 \E_{\V}[T_{A'}] \right),
    \end{align*}
    where the first inequality holds because $\pi$ is legit. Then, by simple algebra, we have

    \begin{align*}
        \E_{\V}[T_{A'}] \geq \frac{\log\left( \frac{T\Delta}{4} \right) - \log(2 \rtgpi{K_N})}{2 N \Delta^2}. 
    \end{align*}

    By setting $\Delta = \frac{12 \rtgpi{K_N}}{T}$,

    \begin{align*}
        \E_{\V}[T_{A'}] \geq \frac{\log(\frac32) T^2}{2 \times 144 \times N \rtgpi{K_N}^2} = \frac{\log(\frac32)}{288} \frac{T^2}{N \rtgpi{K_N}^2}. 
    \end{align*}

    As this inequality holds for every treatment $A' \in S$, then
    \begin{align*}
        \rtgpi{\G} \geq \rtgpi{\V} &\geq \frac{1}{N} \sum_{A' \in S} \E_{\V}[T_{A'}] (N \Delta) \geq \frac{1}{N} \sum_{A' \in S}\frac{\log(\frac32)}{288} \frac{T^2}{N \rtgpi{K_N}^2} \frac{12 N \rtgpi{K_N}}{T} \\
        & = \frac{|S|}{N}\frac{\log(\frac32)}{288} \frac{12T}{\rtgpi{K_N}}  \\
        & \geq \frac{k^N}{2N}\frac{\log(\frac32)}{288} \frac{12T}{\rtgpi{K_N}} \in \Omega \left( \frac{Tk^N}{N \rtgpi{K_N}} \right) \\
        \Longrightarrow &\rtgpi{\G} \rtgpi{K_N} \in \Omega \left( \frac{T}{N} k^N \right),
    \end{align*}
    where the last inequality holds because $|S| = (k-1)^N > \frac{k^N}{2}$ for $k > \frac{2^{\frac{1}{N}}}{2^{\frac{1}{N}} - 1}$.
\end{proof}


\end{document}